\begin{document}

\title{Matroidal structure of rough sets based on serial and transitive relations}

\author{Yanfang Liu, William Zhu~\thanks{Corresponding author.
E-mail: williamfengzhu@gmail.com (William Zhu)}}
\institute{
Lab of Granular Computing\\
Zhangzhou Normal University, Zhangzhou 363000, China}



\date{\today}          
\maketitle

\begin{abstract}
The theory of rough sets is concerned with the lower and upper approximations of objects through a binary relation on a universe.
It has been applied to machine learning, knowledge discovery and data mining.
The theory of matroids is a generalization of linear independence in vector spaces.
It has been used in combinatorial optimization and algorithm design.
In order to take advantages of both rough sets and matroids, in this paper we propose a matroidal structure of rough sets based on a serial and transitive relation on a universe.
We define the family of all minimal neighborhoods of a relation on a universe, and prove it satisfy the circuit axioms of matroids when the relation is serial and transitive.
In order to further study this matroidal structure, we investigate the inverse of this construction: inducing a relation by a matroid.
The relationships between the upper approximation operators of rough sets based on relations and the closure operators of matroids in the above two constructions are studied.
Moreover, we investigate the connections between the above two constructions.

\textbf{Keywords:}
rough set, matroid, neighborhood, lower and upper approximations, closure, 2-circuit matroid.
\end{abstract}


\section{Introduction}
The theory of rough sets~\cite{Pawlak82Rough} proposed by Pawlak is an extension of set theory for handling incomplete and inexact knowledge in information and decision systems.
And it has been successfully applied to many fields, such as machine learning, granular computing~\cite{CalegariCiucci10}, data mining, approximate reasoning, attribute reduction~\cite{HallHolmes03Benchmarking,MiaoZhaoYaoLiXu09relativereducts,MinHeQianZhu11Test,ZhangQiuWu07ageneralapproach}, rule induction and others~\cite{ChenMiaoWang10aroughsetapproach,DaiXu12approximations}.
For an equivalence relation on a unverse, a rough set is a formal approximation of a crisp set in terms of a pair of sets which give the lower and the upper approximations of the original set.
In order to meet many real applications, the rough sets have been extended to generalized rough sets based on relations~\cite{Kryszkiewicz98Rough,LiuWang06Semantic,QinYangPei08generalizedroughsets,SlowinskiVanderpooten00AGeneralized,Yao98Constructive,Yao98Relational}, and covering-based rough sets~\cite{BonikowskiBryniarskiWybraniecSkardowska98Extensions,Zhu07Topological,ZhuWang03Reduction}.
In this paper, we focus on generalized rough sets based on relations.

Matroid theory~\cite{Lai01Matroid} proposed by Whitney is a generalization of linear algebra and graph theory.
And matroids have been used in diverse fields, such as combinatorial optimization, algorithm design,  information coding and cryptology.
Since a matroid can be defined by many different but equivalent ways, matroid theory has powerful axiomatic systems.
Matroids have been connected with other theories, such as rough sets~\cite{LiuZhu12characteristicofpartition-circuitmatroid,LiuZhuZhang12Relationshipbetween,TangSheZhu12matroidal,WangZhuZhuMin12matroidalstructure}, generalized rough sets based on relations~\cite{ZhuWang11Matroidal,ZhangWangFengFeng11reductionofrough}, covering-based rough sets~\cite{WangZhu11Matroidal,WangZhuZhuMin12quantitative} and lattices~\cite{AignerDowling71matchingtheory,Mao06TheRelation,Matus91abstractfunctional}.

Rough sets and matroids have their own application fields in the real world.
In order to make use of both rough sets and matroids, researchers have combined them with each other and connected them with other theories.
In this paper, we propose a matroidal structure based on a serial and transitive relation on a universe and study its relationships with the lower and upper approximations of generalized rough sets based on relations.

First, we define a family of sets of a relation on a universe, and we call it the family of all minimal neighborhoods of the relation.
When the relation is serial and transitive, the family of all minimal neighborhoods satisfies the circuit axioms of matroids, then a matroid is induced.
And we say the matroid is induced by the serial and transitive relation.
Moreover, we study the independent sets of the matroid through the neighborhood, the lower and upper approximation operators of generalized rough sets based on relations, respectively.
A sufficient and necessary condition, when different relations generate the same matroid, is investigated.
And the relationships between the upper approximation operator of a relation and the closure operator of the matroid induced by the relation are studied.
We employ a special type of matroids, called 2-circuit matroids, which is introduced in~\cite{WangZhuZhuMin12matroidalstructure}.
When a relation is an equivalence relation, the upper approximation operator of the relation is equal to the closure operator of the matroid induced by the relation if and only if the matroid is a 2-circuit matroid.
In order to study the matroidal structure of the rough set based on a serial and transitive relation, we investigate the inverse of the above construction.
In other words, we construct a relation from a matroid.
Through the connectedness in a matroid, a relation can be obtained and proved to be an equivalence relation.
A sufficient and necessary condition, when different matroids induce the same relation, is studied.
Moreover, the relationships between the closure operator of a matroid and the upper approximation operator of the relation induced by the matroid are investigated.
Especially, for a matroid on a universe, its closure operator is equal to the upper approximation operator of the induced equivalence relation if and only if the matroid is a 2-circuit matroid.

Second, the relationships between the above two constructions are studied.
On one hand, for a matroid on a universe, it can induce an equivalence relation, and the equivalence relation can generate a matroid, we prove that the circuit family of the original matroid is finer than one of the induced matroid.
And the original matroid is equal to the induced matroid if and only if the circuit family of the original matroid is a partition.
On the other hand, for a reflexive and transitive relation on a universe, it can generate a matroid, and the matroid can induce an equivalence relation, then the relationship between the equivalence relation and the original relation is studied.
The original relation is equal to the induced equivalence relation if and only if the original relation is an equivalence relation.

The rest of this paper is organized as follows: In Section~\ref{S:preliminaries}, we recall some basic definitions of generalized rough sets based on relations and matroids.
In Section~\ref{S:twoinductionsbetweenmatroidandrelation}, we propose two constructions between relations and matroids.
For a relation on a universe, Subsection~\ref{S:inductionofmatroidbyrelation} defines a family of sets called the family of all minimal neighborhoods and proves it to satisfy the circuit axioms of matroids when the relation is serial and transitive.
The independent sets of the matroid are studied.
And the relationships between the upper approximation operator of a relation and the closure operator of the matroid induced by the relation are investigated.
In Subsection~\ref{S:inductionofrelationbymatroid}, through the circuits of a matroid, we construct a relation and prove it an equivalence relation.
The relationships between the closure operator of the matroid and the upper approximation operator of the induced equivalence relation are studied.
Section~\ref{S:relationshipsbetweenthetwoinductions} represents the relationships between the two constructions proposed in Section~\ref{S:twoinductionsbetweenmatroidandrelation}.
Finally, we conclude this paper in Section~\ref{S:conclusions}.

\section{Preliminaries}
\label{S:preliminaries}
In this section, we recall some basic definitions and related results of generalized rough sets and matroids which will be used in this paper.

\subsection{Relation based generalized rough sets}
Given a universe and a relation on the universe, they form a rough set.
In this subsection, we introduce some concepts and properties of generalized rough sets based on relations~\cite{Yao98Constructive}.
The neighborhood is important and used to construct the approximation operators.

\begin{definition}(Neighborhood~\cite{Yao98Constructive})
\label{D:neighborhood}
Let $R$ be a relation on $U$.
For any $x\in U$, we call the set $RN_{R}(x)=\{y\in U:xRy\}$ the successor neighborhood of $x$ in $R$.
When there is no confusion, we omit the subscript $R$.
\end{definition}

In the following definition, we introduce the lower and upper approximation operators of generalized rough sets based on relations through the neighborhood.

\begin{definition}(Lower and upper approximation operators~\cite{Yao98Constructive})
\label{D:approximationoperators}
Let $R$ be a relation on $U$.
A pair of operators $L_{R}, H_{R}:2^{U}\rightarrow 2^{U}$ are defined as follows: for all $X\subseteq U$,
\centerline{$L_{R}(X)=\{x\in U:RN(x)\subseteq X\}$,}
\centerline{~~~~~~$H_{R}(X)=\{x\in U:RN(x)\cap X\neq\emptyset\}$.}
$L_{R}, H_{R}$ are called the lower and upper approximation operators of $R$, respectively.
We omit the subscript $R$ when there is no confusion.
\end{definition}

We present properties of the lower and upper approximation operators in the following proposition.

\begin{proposition}(\cite{Yao98Constructive})
\label{P:propertiesofapproximations}
Let $R$ be a relation on $U$ and $X^{c}$ the complement of $X$ in $U$.
For all $X, Y\subseteq U$,\\
(1L) $L(U)=U$;\\
(1H) $H(\emptyset)=\emptyset$;\\
(2L) $X\subseteq Y\Rightarrow L(X)\subseteq L(Y)$;\\
(2H) $X\subseteq Y\Rightarrow H(X)\subseteq H(Y)$;\\
(3L) $L(X\cap Y)=L(X)\cap L(Y)$;\\
(3H) $H(X\cup Y)=H(X)\cup H(Y)$;\\
(4LH) $L(X^{c})=(H(X))^{c}$.
\end{proposition}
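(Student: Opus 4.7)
The plan is to verify each of the seven properties by unfolding the definitions of $L$ and $H$ from Definition~\ref{D:approximationoperators}. All seven statements reduce to elementary facts about subset containment and nonempty intersection once one tracks the quantifier over the successor neighborhood $RN(x)$. None of the properties depends on any special assumption on $R$ (such as seriality, transitivity, or reflexivity), so the proofs should go through purely at the level of set theory.

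I would first handle the normalization and monotonicity clauses. For (1L), since $RN(x)\subseteq U$ for every $x\in U$, every $x$ satisfies the defining condition of $L(U)$; symmetrically, for (1H), $RN(x)\cap\emptyset=\emptyset$ for every $x$, so $H(\emptyset)$ is empty. The monotonicity statements (2L) and (2H) are one-line implications: if $RN(x)\subseteq X$ and $X\subseteq Y$ then $RN(x)\subseteq Y$, giving $L(X)\subseteq L(Y)$; and if $RN(x)\cap X\neq\emptyset$ with $X\subseteq Y$ then $RN(x)\cap Y\neq\emptyset$, giving $H(X)\subseteq H(Y)$. Next, for (3L) I would apply the tautology that $A\subseteq X\cap Y$ is equivalent to $(A\subseteq X)\wedge(A\subseteq Y)$, taking $A=RN(x)$, which gives equality of the two sides. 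For (3H), the dual tautology $A\cap(X\cup Y)\neq\emptyset$ iff $(A\cap X\neq\emptyset)\vee(A\cap Y\neq\emptyset)$ does the same job.

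Finally, (4LH) is the only property that genuinely exploits complementation, so it is worth writing out as a chain of equivalences: $x\in L(X^{c})$ iff $RN(x)\subseteq X^{c}$ iff $RN(x)\cap X=\emptyset$ iff $x\notin H(X)$ iff $x\in(H(X))^{c}$. No step is a real obstacle; the whole proposition amounts to standard Boolean manipulations, and the duality (4LH) is the only clause that requires more than a single sentence to justify cleanly.
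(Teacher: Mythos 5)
Your proof is correct: the paper states this proposition without proof (citing Yao), and your definition-unfolding argument is exactly the standard one, with each clause reducing to an elementary Boolean fact about $RN(x)$ and the duality (4LH) following from $RN(x)\subseteq X^{c}\Leftrightarrow RN(x)\cap X=\emptyset$. Nothing is missing, and you are right that no assumption on $R$ is needed.
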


In~\cite{Zhu07Generalized}, Zhu has studied generalized rough sets based on relations and investigated conditions for a relation to satisfy some of common properties of classical lower and upper approximation operators.
In the following proposition, we introduce one result used in this paper.

\begin{proposition}
\label{P:reflexiverelation}
Let $R$ be a relation on $U$.
Then,\\
\centerline{$R$ is reflexive $\Leftrightarrow$ $X\subseteq H(X)$ for all $X\subseteq U$.}
\end{proposition}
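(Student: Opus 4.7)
The plan is to prove both directions directly from the definition of $H$ and the definition of reflexivity, using the characterization $H(X)=\{x\in U:RN(x)\cap X\neq\emptyset\}$ from Definition~\ref{D:approximationoperators}.

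For the forward direction, I would assume $R$ is reflexive and fix an arbitrary $X\subseteq U$. Given any $x\in X$, reflexivity yields $xRx$, hence $x\in RN(x)$. Since $x$ also lies in $X$, the intersection $RN(x)\cap X$ contains $x$ and is therefore nonempty, which places $x$ in $H(X)$. This shows $X\subseteq H(X)$.

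For the reverse direction, I would argue by specializing the hypothesis to singletons. Pick any $x\in U$ and apply the assumption to $X=\{x\}$: then $\{x\}\subseteq H(\{x\})$, so $x\in H(\{x\})$, meaning $RN(x)\cap\{x\}\neq\emptyset$. The only possible common element is $x$ itself, so $x\in RN(x)$, i.e.\ $xRx$. Since $x$ was arbitrary, $R$ is reflexive.

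There is no real obstacle here; the statement is essentially an unfolding of definitions, and the singleton test set in the backward direction is the standard trick. The only thing to keep straight is that $x\in RN(x)$ is the set-theoretic reformulation of $xRx$, which is immediate from Definition~\ref{D:neighborhood}.
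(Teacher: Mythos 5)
Your proof is correct and complete: the forward direction unfolds reflexivity into $x\in RN(x)$ and the backward direction tests the hypothesis on singletons, which is exactly the standard argument. The paper itself states this proposition without proof, citing it as a known result of Zhu, so there is no in-paper proof to compare against; your argument fills that gap in the expected way and there is nothing to fix.
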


In the following definition, we use the neighborhood to describe a serial relation and a transitive relation on a universe.

\begin{definition}(\cite{Yao98Constructive})
\label{D:serialandtransitive}
Let $R$ be a relation on $U$.\\
(1) $R$ is serial $\Leftrightarrow\forall x\in U[RN(x)\neq\emptyset]$;\\
(2) $R$ is transitive $\Leftrightarrow\forall x, y\in U[y\in RN(x)\Rightarrow RN(y)\subseteq RN(x)]$.
\end{definition}

\subsection{Matroids}
Matroids have many equivalent definitions.
In the following definition, we will introduce one that focuses on independent sets.

\begin{definition}(Matroid~\cite{Lai01Matroid})
\label{D:matroid}
A matroid is a pair $M=(U, \mathbf{I})$ consisting of a finite universe $U$ and a collection $\mathbf{I}$ of subsets of $U$ called independent sets satisfying the following three properties:\\
(I1) $\emptyset\in\mathbf{I}$;\\
(I2) If $I\in \mathbf{I}$ and $I'\subseteq I$, then $I'\in \mathbf{I}$;\\
(I3) If $I_{1}, I_{2}\in \mathbf{I}$ and $|I_{1}|<|I_{2}|$, then there exists $u\in I_{2}-I_{1}$ such that $I_{1}\cup \{u\}\in \mathbf{I}$, where $|I|$ denotes the cardinality of $I$.
\end{definition}

Since the above definition is from the viewpoint of independent sets to represent matroids, it is also called the independent set axioms of matroids.
In order to make some expressions brief, we introduce several symbols as follows.

\begin{definition}(\cite{Lai01Matroid})
\label{D:symbol}
Let $U$ be a finite universe and $\mathbf{A}$ a family of subsets of $U$.
Several symbols are defined as follows:\\
$FMIN(\mathbf{A})=\{X\in\mathbf{A}:\forall Y\in\mathbf{A}, Y\subseteq X\Rightarrow X=Y\}$;\\
$Upp(\mathbf{A})=\{X\subseteq U:\exists A\in\mathbf{A}$ s.t. $A\subseteq X\}$;\\
$Opp(\mathbf{A})=\{X\subseteq U:X\notin\mathbf{A}\}$.
\end{definition}

In a matroid, a subset is a dependent set if it is not an independent set.
Any circuit of a matroid is a minimal dependent set.

\begin{definition}(Circuit~\cite{Lai01Matroid})
\label{D:circuit}
Let $M=(U, \mathbf{I})$ be a matroid.
Any minimal dependent set in $M$ is called a circuit of $M$, and we denote the family of all circuits of $M$ by $\mathbf{C}(M)$, i.e., $\mathbf{C}(M)=FMIN(Opp(\mathbf{I}))$.
\end{definition}

A matroid can be defined from the viewpoint of circuits, in other words, a matroid uniquely determines its circuits, and vice versa.

\begin{proposition}(Circuit axioms~\cite{Lai01Matroid})
\label{P:circuitaxiom}
Let $\mathbf{C}$ be a family of subsets of $U$.
Then there exists $M=(U, \mathbf{I})$ such that $\mathbf{C}=\mathbf{C}(M)$ if and only if $\mathbf{C}$ satisfies the following conditions:\\
(C1) $\emptyset\notin\mathbf{C}$;\\
(C2) If $C_{1}, C_{2}\in\mathbf{C}$ and $C_{1}\subseteq C_{2}$, then $C_{1}=C_{2}$;\\
(C3) If $C_{1}, C_{2}\in\mathbf{C}$, $C_{1}\neq C_{2}$ and $c\in C_{1}\cap C_{2}$, then there exists $C_{3}\in\mathbf{C}$ such that $C_{3}\subseteq C_{1}\cup C_{2}-\{c\}$.
\end{proposition}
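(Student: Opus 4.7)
The plan is to prove the biconditional by handling each direction separately. For the forward direction, I assume $\mathbf{C} = \mathbf{C}(M)$ for a matroid $M = (U, \mathbf{I})$ and verify (C1)--(C3). Property (C1) is immediate from (I1): since $\emptyset \in \mathbf{I}$, the empty set is not a minimal dependent set. Property (C2) is a direct consequence of the minimality built into the definition of a circuit: if $C_1 \subseteq C_2$ are both circuits, then $C_1$ is dependent, so minimality of $C_2$ forces $C_1 = C_2$. The substantive point in this direction is (C3), the classical circuit-elimination property, which I would prove by contradiction: assuming that $C_1 \cup C_2 - \{c\}$ contains no circuit, hereditariness of independent sets (I2) makes $C_1 \cup C_2 - \{c\}$ itself independent. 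A careful exchange argument using (I3), comparing a maximal independent subset of $C_1 \cup C_2$ that extends $C_2 - \{y\}$ (for some $y \in C_2 - C_1$, which exists by (C2)) against $C_1 \cup C_2 - \{c\}$, then yields a cardinality contradiction forcing $C_1$ to be independent.

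For the reverse direction, given $\mathbf{C}$ satisfying (C1)--(C3), I define $\mathbf{I} = \{I \subseteq U : C \not\subseteq I \text{ for every } C \in \mathbf{C}\}$, and then show that $(U, \mathbf{I})$ is a matroid whose circuit family equals $\mathbf{C}$. Axiom (I1) follows from (C1) because no $C \in \mathbf{C}$ equals $\emptyset$, so no $C$ can be contained in $\emptyset$. Axiom (I2) is immediate: any circuit contained in $I' \subseteq I$ is also contained in $I$. The heart of the argument is (I3). Given $I_1, I_2 \in \mathbf{I}$ with $|I_1| < |I_2|$, I argue by contradiction: if no $u \in I_2 - I_1$ satisfies $I_1 \cup \{u\} \in \mathbf{I}$, then for each such $u$ there is a circuit $C_u \subseteq I_1 \cup \{u\}$ necessarily containing $u$ (since $I_1 \in \mathbf{I}$). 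Choosing a counterexample pair $(I_1, I_2)$ minimizing $|I_1 - I_2|$ and applying (C3) to two circuits $C_u, C_v$ arising from distinct witnesses (at a shared element of $I_1$) produces a circuit lying strictly inside $I_1 \cup (I_2 - I_1)$ that conflicts with the minimality of the chosen counterexample. Finally, to close the loop, I unravel definitions: any minimal dependent set of $(U, \mathbf{I})$ must coincide with the member of $\mathbf{C}$ that witnesses its dependence (by minimality), and (C2) ensures that every member of $\mathbf{C}$ is already a minimal dependent set, so $\mathbf{C}(M) = \mathbf{C}$.

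The main obstacles are (C3) in the forward direction and (I3) in the reverse direction; both rely on cardinality-based exchange arguments, whereas the remaining axioms drop out directly from the definitions. These two properties are the classical bridges between the independent-set and circuit viewpoints of matroids, so although the details of the exchange step are somewhat delicate, the proof ultimately reduces to careful bookkeeping with (I3) and (C3) respectively, together with the observation that a set is dependent in $(U, \mathbf{I})$ precisely when it contains a member of $\mathbf{C}$.
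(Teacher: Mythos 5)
This proposition is quoted in the paper from the matroid-theory literature (\cite{Lai01Matroid}) and the paper itself gives no proof, so there is nothing internal to compare against; your proposal has to stand on its own as the standard textbook argument, and for the most part it does. The forward direction is fine: (C1) and (C2) are immediate, and your plan for (C3) is the classical one --- assume $C_{1}\cup C_{2}-\{c\}$ contains no circuit (hence, by finiteness, is independent), pick $y\in C_{2}-C_{1}$ using (C2), take a maximal independent subset $I$ of $C_{1}\cup C_{2}$ containing $C_{2}-\{y\}$, observe that $I$ misses $y$ and some element of $C_{1}$ distinct from $y$, so $|I|\leq|C_{1}\cup C_{2}|-2<|C_{1}\cup C_{2}-\{c\}|$, and apply (I3). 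The only imprecision is your closing phrase: the contradiction obtained is with the \emph{maximality} of $I$ inside $C_{1}\cup C_{2}$, not with the independence of $C_{1}$; this is cosmetic and easily repaired.

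The genuine gap is in your treatment of (I3) in the reverse direction. You propose, for a minimal counterexample pair, to take fundamental circuits $C_{u}\subseteq I_{1}\cup\{u\}$ and $C_{v}\subseteq I_{1}\cup\{v\}$ for distinct $u,v\in I_{2}-I_{1}$ and to apply (C3) at ``a shared element of $I_{1}$.'' Two problems: first, $C_{u}\cap C_{v}\cap I_{1}$ may well be empty, and nothing in your setup guarantees a common element; second, even when $w\in C_{u}\cap C_{v}\cap I_{1}$ exists, (C3) only yields a circuit inside $(C_{u}\cup C_{v})-\{w\}\subseteq(I_{1}-\{w\})\cup\{u,v\}$, which is not a set you know to be independent and does not obviously violate the minimality of $|I_{1}-I_{2}|$, so no contradiction is reached. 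The standard way to close this step is different: having minimized $|I_{1}-I_{2}|$, note $I_{1}-I_{2}\neq\emptyset$ and fix $e\in I_{1}-I_{2}$; for each $f\in I_{2}-I_{1}$ the set $(I_{2}-\{f\})\cup\{e\}$ must contain some circuit $C_{f}$ (otherwise it replaces $I_{2}$ and contradicts minimality), each such $C_{f}$ necessarily contains $e$ and some $g\in I_{2}-I_{1}$ with $g\neq f$ (else $C_{f}\subseteq I_{1}$), and applying (C3) to $C_{f}\neq C_{g}$ at the \emph{common element $e$} produces a circuit contained in $I_{2}$, contradicting $I_{2}\in\mathbf{I}$. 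Your final step identifying $\mathbf{C}(M)$ with $\mathbf{C}$ is correct as stated.
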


The closure operator is one of important characteristics of matroids.
A matroid and its closure operator can uniquely determine each other.
In the following definition, we use the circuits of matroids to represent the closure operator.

\begin{definition}(Closure~\cite{Lai01Matroid})
\label{D:closure}
Let $M=(U, \mathbf{I})$ be a matroid and $X\subseteq U$.\\
\centerline{$cl_{M}(X)=X\cup\{u\in X^{c}:\exists C\in\mathbf{C}(M)$ s.t. $u\in C\subseteq X\cup\{u\}\}$}
is called the closure of $X$ with respect to $M$.
And $cl_{M}$ is the closure operator of $M$.
\end{definition}

\section{Two constructions between matroid and relation}
\label{S:twoinductionsbetweenmatroidandrelation}
In this section, we propose two constructions between a matroid and a relation.
One construction is from a relation to a matroid, and the other construction is from a matroid to a relation.

\subsection{Construction of matroid by serial and transitive relation}
\label{S:inductionofmatroidbyrelation}
In~\cite{LiuZhu12characteristicofpartition-circuitmatroid}, for an equivalence relation on a universe, we present a matroidal structure whose family of all circuits is the partition induced by the equivalence relation.
Through extending equivalence relations, in this subsection, we propose a matroidal structure which is based on a serial and transitive relation.
First, for a relation on a universe, we define a family of sets, namely, the family of all minimal neighborhoods.

\begin{definition}(The family of all minimal neighborhoods)
\label{D:thesetfamily}
Let $R$ be a relation on $U$.
We define a family of sets of $R$ as follows:\\
\centerline{$\mathbf{C}(R)=FMIN\{RN(x):x\in U\}$.}
We call $\mathbf{C}(R)$ the family of all minimal neighborhoods of $R$.
\end{definition}

In the following proposition, we will prove the family of all minimal neighborhoods of a relation on a universe satisfies the circuit axioms of matroids when the relation is serial and transitive.

\begin{proposition}
\label{P:CRsatisfiescircuitaxiom}
Let $R$ be a relation on $U$.
If $R$ is serial and transitive, then $\mathbf{C}(R)$ satisfies (C1), (C2) and (C3) in Proposition~\ref{P:circuitaxiom}.
\end{proposition}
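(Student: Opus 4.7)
\medskip

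\noindent\textbf{Proof plan.} The plan is to verify (C1), (C2), (C3) in turn, with (C3) being where all the work lies.

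For (C1), I will use seriality. By Definition~\ref{D:serialandtransitive}(1), $RN(x)\neq\emptyset$ for every $x\in U$, so every element of the family $\{RN(x):x\in U\}$ is nonempty; taking $FMIN$ preserves nonemptiness, so $\emptyset\notin\mathbf{C}(R)$. For (C2), I will just unwind Definition~\ref{D:symbol}: the members of $FMIN\{RN(x):x\in U\}$ are by construction incomparable under $\subseteq$, so $C_{1}\subseteq C_{2}$ with $C_{1},C_{2}\in\mathbf{C}(R)$ forces $C_{1}=C_{2}$. These two steps are immediate and will occupy at most a few lines.

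The real content is (C3), and this is where transitivity is essential. Given $C_{1},C_{2}\in\mathbf{C}(R)$ with $C_{1}\neq C_{2}$ and $c\in C_{1}\cap C_{2}$, write $C_{1}=RN(x_{1})$ and $C_{2}=RN(x_{2})$. Since $c\in RN(x_{1})$ and $c\in RN(x_{2})$, transitivity (Definition~\ref{D:serialandtransitive}(2)) gives $RN(c)\subseteq RN(x_{1})\cap RN(x_{2})=C_{1}\cap C_{2}$. Now $RN(c)$ itself belongs to $\{RN(x):x\in U\}$, so by finiteness of $U$ there exists $x_{3}\in U$ with $RN(x_{3})\in\mathbf{C}(R)$ and $RN(x_{3})\subseteq RN(c)$. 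Set $C_{3}:=RN(x_{3})$; then $C_{3}\subseteq C_{1}\cap C_{2}\subseteq C_{1}\cup C_{2}$. It remains only to exclude $c$ from $C_{3}$.

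The hard (and only interesting) step is ruling out $c\in C_{3}$. Here is the trick I expect to use: if $c\in C_{3}=RN(x_{3})$, then transitivity yields $RN(c)\subseteq RN(x_{3})=C_{3}$, which combined with $C_{3}\subseteq RN(c)$ forces $C_{3}=RN(c)$. Thus $RN(c)$ itself lies in $\mathbf{C}(R)$, and since $RN(c)\subseteq C_{1}$ with both in $\mathbf{C}(R)$, the $FMIN$ property gives $RN(c)=C_{1}$; symmetrically $RN(c)=C_{2}$, contradicting $C_{1}\neq C_{2}$. Hence $c\notin C_{3}$ and $C_{3}\subseteq C_{1}\cup C_{2}-\{c\}$, as required. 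I anticipate the main obstacle being precisely this last step — making sure that the minimal neighborhood chosen inside $RN(c)$ does not accidentally retain $c$ — and the closing-off argument above (using transitivity to turn $c\in C_{3}$ into the collapse $C_{3}=RN(c)$, then the $FMIN$ property to collapse $C_{1}$ and $C_{2}$) is the cleanest way I see to do it.
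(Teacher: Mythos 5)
Your proof is correct, and your (C1) and (C2) are exactly the paper's. For (C3) the underlying engine is the same in both arguments --- from $c\in RN(x_{1})\cap RN(x_{2})$, transitivity gives $RN(c)\subseteq RN(x_{1})$ and $RN(c)\subseteq RN(x_{2})$, and the $FMIN$ property then collapses both to $RN(c)$ --- but the packaging differs. The paper applies this collapse directly to the hypothesis $c\in C_{1}\cap C_{2}$, concludes $C_{1}=RN(c)=C_{2}$, contradicting $C_{1}\neq C_{2}$, and hence that distinct members of $\mathbf{C}(R)$ are pairwise disjoint; (C3) then holds vacuously. You instead construct an explicit $C_{3}$ (a minimal neighborhood inside $RN(c)$, which exists by finiteness) and invoke the collapse only to exclude the case $c\in C_{3}$. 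It is worth noticing that your closing step does not really need the assumption $c\in C_{3}$: the inference ``$RN(c)\subseteq C_{1}$ and $C_{1}\in FMIN\{RN(x):x\in U\}$ force $RN(c)=C_{1}$'' requires only that $RN(c)$ belong to the underlying family $\{RN(x):x\in U\}$, not that it lie in $\mathbf{C}(R)$, so the same contradiction is already available the moment you have $c\in C_{1}\cap C_{2}$. In other words, the situation you build $C_{3}$ for never occurs, and the detour through $x_{3}$ can be deleted. What your version buys is robustness --- it would still go through in a setting where distinct circuits could intersect --- while the paper's shortcut is shorter and also records the structurally useful fact that the minimal neighborhoods of a transitive relation are pairwise disjoint, which the paper reuses later as Lemma~\ref{L:transitive_therelationshipbetweentheminimalneighbors}.
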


\begin{proof}
(C1): Since $R$ is a serial relation, according to (1) of Definition~\ref{D:serialandtransitive}, then $\emptyset\notin\mathbf{C}(R)$.\\
(C2): According to Definition~\ref{D:thesetfamily} and Definition~\ref{D:symbol}, it is straightforward that $\mathbf{C}(R)$ satisfies (C2).\\
(C3): If $C_{1}, C_{2}\in\mathbf{C}(R)$ and $C_{1}\neq C_{2}$, then there exist $x_{1}, x_{2}\in U$ and $x_{1}\neq x_{2}$ such that $C_{1}=RN(x_{1}), C_{2}=RN(x_{2})$.
If $x\in C_{1}\cap C_{2}$, i.e., $x\in RN(x_{1})\cap RN(x_{2})$, then $x\in RN(x_{1})$ and $x\in RN(x_{2})$.
Since $R$ is a transitive relation, according to (2) of Definition~\ref{D:serialandtransitive}, we can obtain that $RN(x)\subseteq RN(x_{1})$ and $RN(x)\subseteq RN(x_{2})$.
According to Definition~\ref{D:thesetfamily}, we can obtain $RN(x)=RN(x_{1})=RN(x_{2})$ which is contradictory with $RN(x_{1})\neq RN(x_{2})$.
Therefore, for any $C_{1}, C_{2}\in\mathbf{C}(R)$ and $C_{1}\neq C_{2}$, $C_{1}\cap C_{2}=\emptyset$ when $R$ is transitive.
Then, it is straightforward to see $\mathbf{C}(R)$ satisfies (C3) in Proposition~\ref{P:circuitaxiom}.
\end{proof}

It is natural to ask the following question: ``when the family of all minimal neighborhoods of a relation satisfies the circuit axioms of matroids, is the relation serial and transitive?''.
In the following proposition, we will solve this issue.

\begin{proposition}
Let $R$ be a relation on $U$.
If $\mathbf{C}(R)$ satisfies (C1), (C2) and (C3) in Proposition~\ref{P:circuitaxiom}, then $R$ is serial.
\end{proposition}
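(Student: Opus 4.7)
The plan is to argue by contraposition: I will show that if $R$ is not serial, then $\mathbf{C}(R)$ must violate (C1), i.e.\ must contain the empty set.

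First, I would unpack the failure of seriality using Definition~\ref{D:serialandtransitive}(1): if $R$ is not serial, then there exists some $x_{0}\in U$ with $RN(x_{0})=\emptyset$. Hence $\emptyset$ is a member of the family $\{RN(x):x\in U\}$.

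Next, I would apply the definition of $FMIN$ from Definition~\ref{D:symbol} to the family $\mathbf{A}=\{RN(x):x\in U\}$. The empty set is trivially minimal in any family of subsets that contains it: the only $Y\in\mathbf{A}$ with $Y\subseteq\emptyset$ is $\emptyset$ itself, so the condition $Y\subseteq X\Rightarrow X=Y$ is vacuously satisfied at $X=\emptyset$. Therefore $\emptyset\in FMIN\{RN(x):x\in U\}=\mathbf{C}(R)$, contradicting (C1). This contraposition then yields the claim.

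I do not foresee a real obstacle; the argument is a one-line contrapositive because the $FMIN$ operator preserves the empty set whenever it appears, and (C1) explicitly forbids this. The only mild subtlety is spelling out why $\emptyset$ is automatically minimal in any family containing it, which is the vacuous-quantifier observation above.
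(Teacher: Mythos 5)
Your proof is correct and follows essentially the same route as the paper: both arguments rest on the observation that if some $RN(x_{0})=\emptyset$ then $\emptyset$ would belong to $FMIN\{RN(x):x\in U\}=\mathbf{C}(R)$, contradicting (C1). The paper states this directly and tersely, while you phrase it as a contrapositive and usefully make explicit the (vacuous) minimality of $\emptyset$ in any family containing it.
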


\begin{proof}
According to (C1) of Proposition~\ref{P:circuitaxiom} and Definition~\ref{D:thesetfamily}, $RN(x)\neq\emptyset$ for any $x\in U$.
According to Definition~\ref{D:serialandtransitive}, $R$ is serial.
\end{proof}

When the family of all minimal neighborhoods of a relation satisfies the circuit axioms of matroids, the relation is not always a transitive relation as shown in the following example.

\begin{example}
Let $U=\{1, 2, 3\}$ and $R=\{(1, 2), (1, 3), (2, 1), (2, 3), (3, 1)\}$ a relation on $U$.
Since $RN(1)=\{2, 3\}, RN(2)=\{1, 3\}, RN(3)=\{1\}$, then $\mathbf{C}(R)=\{\{1\}, \{2, 3\}\}$.
According to Proposition~\ref{P:circuitaxiom}, $\mathbf{C}(R)$ satisfies the circuit axioms of matroids.
We see that $(1, 2)\in R, (2, 1)\in R$, but $(1, 1)\notin R, (2, 2)\notin R$, so $R$ is not transitive.
\end{example}

In this paper, we consider that the family of all minimal neighborhoods of a relation can generate a matroid when the relation is serial and transitive.

\begin{definition}
\label{D:amatroidinducedbyarelation}
Let $R$ be a serial and transitive relation on $U$.
The matroid with $\mathbf{C}(R)$ as its circuit family is denoted by $M(R)=(U, \mathbf{I}(R))$, where $\mathbf{I}(R)=Opp(Upp(\mathbf{C}(R)))$.
We say $M(R)$ is the matroid induced by $R$.
\end{definition}

The matroid induced by a serial and transitive relation can be illustrated by the following example.

\begin{example}
\label{E:example1}
Let $U=\{1, 2, 3\}$ and $R=\{(1, 1), (1, 3), (2, 1), (2, 3), (3, 3)\}$ a serial and transitive relation on $U$.
Since $RN(1)=RN(2)=\{1, 3\}, RN(3)=\{3\}$, then $\mathbf{C}(R)=\{\{3\}\}$.
Therefore, $M(R)=(U, \mathbf{I}(R))$ where $\mathbf{I}(R)=\{\emptyset, \{1\}, \{2\}, \{1, 2\}\}$.
\end{example}

Generally speaking, a matroid is defined from the viewpoint of independent sets.
In the following, we will investigate the independent sets of the matroid induced by a serial and transitive relation.

\begin{proposition}
\label{P:firsttypeofindependentsetofMR}
Let $R$ be a serial and transitive relation on $U$ and $M(R)$ the matroid induced by $R$.
Then,\\
\centerline{$\mathbf{I}(R)=\{I\subseteq U:\forall x\in U, RN(x)\nsubseteq I\}$.}
\end{proposition}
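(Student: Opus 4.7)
The plan is to unfold the definition of $\mathbf{I}(R)$ and reduce the statement to a claim about minimal elements of the finite collection $\{RN(x) : x \in U\}$. By Definition~\ref{D:amatroidinducedbyarelation}, $\mathbf{I}(R) = Opp(Upp(\mathbf{C}(R)))$, so $I \in \mathbf{I}(R)$ if and only if there is no $C \in \mathbf{C}(R)$ with $C \subseteq I$. The goal is therefore to prove the equivalence
\begin{center}
(no $C \in \mathbf{C}(R)$ satisfies $C \subseteq I$) $\Longleftrightarrow$ (for all $x \in U$, $RN(x) \nsubseteq I$).
\end{center}

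For the $(\Leftarrow)$ direction, I would argue contrapositively: if some $C \in \mathbf{C}(R)$ satisfies $C \subseteq I$, then by Definition~\ref{D:thesetfamily} we have $C = RN(x)$ for some $x \in U$, so $RN(x) \subseteq I$. This direction is immediate because $\mathbf{C}(R) \subseteq \{RN(x) : x \in U\}$ and does not require seriality or transitivity.

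The $(\Rightarrow)$ direction is the main point. Suppose there exists $x \in U$ with $RN(x) \subseteq I$; I want to produce some $C \in \mathbf{C}(R)$ with $C \subseteq I$. The key observation is that the family $\{RN(y) : y \in U\}$ is finite (since $U$ is finite), so every element of this family contains an inclusion-minimal element of the family; that minimal element belongs to $FMIN\{RN(y) : y \in U\} = \mathbf{C}(R)$ by Definition~\ref{D:thesetfamily}. Applying this to $RN(x)$ yields some $C \in \mathbf{C}(R)$ with $C \subseteq RN(x) \subseteq I$, as desired. The existence of a minimal element below $RN(x)$ is a standard fact about finite posets and just needs a brief descent argument: if $RN(x)$ itself is not minimal, pick a strictly smaller $RN(x_{1})$ and repeat; finiteness forces termination.

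The main obstacle, such as it is, is making sure the descent step is stated cleanly; seriality and transitivity are not needed for this particular characterization (they entered only to make $\mathbf{C}(R)$ a matroid circuit family in Proposition~\ref{P:CRsatisfiescircuitaxiom}, which is what allows the definition of $\mathbf{I}(R)$ in the first place). I expect the proof to be essentially a one-paragraph unfolding of definitions plus the finite-poset remark.
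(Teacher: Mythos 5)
Your proposal is correct and follows essentially the same route as the paper's proof: unfold $\mathbf{I}(R)=Opp(Upp(\mathbf{C}(R)))$ and verify the two inclusions, with the easy direction using $\mathbf{C}(R)\subseteq\{RN(x):x\in U\}$ and the other direction descending to a minimal neighborhood. In fact you are slightly more careful than the paper, which passes from ``$\exists x,\ RN(x)\subseteq X$'' to ``$X\in Upp(\mathbf{C}(R))$'' without explicitly invoking the finite-poset argument that guarantees a member of $FMIN\{RN(y):y\in U\}$ below $RN(x)$.
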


\begin{proof}
According to Definition~\ref{D:amatroidinducedbyarelation}, $\mathbf{I}(R)=Opp(Upp(\mathbf{C}(R)))$.\\
($\subseteq$): For all $X\notin Opp(Upp(\mathbf{C}(R)))$, i.e., $X\in Upp(\mathbf{C}(R))$, according to Definition~\ref{D:symbol} and Definition~\ref{D:thesetfamily}, there exists $x\in U$ such that $RN(x)\subseteq X$.
Hence $X\notin\{I\subseteq U:\forall x\in U, RN(x)\nsubseteq I\}$, i.e., $X\notin\mathbf{I}(R)$.
This proves that $\mathbf{I}(R)\subseteq Opp(Upp(\mathbf{C}(R)))$.\\
($\supseteq$): Suppose $X\notin\mathbf{I}(R)$, i.e., $X\notin\{I\subseteq U:\forall x\in U, RN(x)\nsubseteq I\}$.
Then there exists $x\in U$ such that $RN(x)\subseteq X$, in other words, $X\in Upp(\mathbf{C}(R))$, i.e., $X\notin Opp(Upp(\mathbf{C}(R)))$.
This proves that $\mathbf{I}(R)\supseteq Opp(Upp(\mathbf{C}(R)))$.
\end{proof}

To illustrate the independent sets of a matroid induced by a serial and transitive relation, the following example is given.

\begin{example}
\label{E:example2}
Let $U=\{1, 2, 3\}$ and $R=\{(1, 3), (2, 3), (3, 3)\}$ a serial and transitive relation on $U$.
Since $RN(1)=RN(2)=RN(3)=\{3\}$, then $\{3\}\nsubseteq\emptyset, \{3\}\nsubseteq \{1\}, \{3\}\nsubseteq \{2\}, \{3\}\nsubseteq \{1, 2\}$.
Therefore $M(R)=(U, \mathbf{I}(R))$ where $\mathbf{I}(R)=\{\emptyset, \{1\},$ $ \{2\}, \{1, 2\}\}$.
\end{example}

The lower and upper approximation operators are constructed through the neighborhood in generalized rough sets based on relations.
In the following proposition, we will study the independent sets of the matroid induced by a serial and transitive relation through the lower approximation operator.

\begin{proposition}
\label{P:secondtypeofindependentsetofMR}
Let $R$ be a serial and transitive relation on $U$ and $M(R)$ the matroid induced by $R$.
Then,\\
\centerline{$\mathbf{I}(R)=\{I\subseteq U:L(I)=\emptyset\}$.}
\end{proposition}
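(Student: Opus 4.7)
The plan is to derive this characterization directly from Proposition~\ref{P:firsttypeofindependentsetofMR}, which already gives $\mathbf{I}(R) = \{I \subseteq U : \forall x \in U, RN(x) \nsubseteq I\}$. Since the right-hand side of the statement to be proved is defined in terms of the lower approximation operator $L$, the entire proof reduces to observing that these two conditions are literally logical contrapositives of each other via the definition of $L$.

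More concretely, I would proceed in two short steps. First, I would unfold $L(I) = \{x \in U : RN(x) \subseteq I\}$ from Definition~\ref{D:approximationoperators}. Thus $L(I) = \emptyset$ is equivalent to the assertion that there is no $x \in U$ with $RN(x) \subseteq I$, which is to say $RN(x) \nsubseteq I$ for every $x \in U$. Second, I would invoke Proposition~\ref{P:firsttypeofindependentsetofMR} to conclude that this is exactly the membership condition for $\mathbf{I}(R)$. The equality then follows by set extensionality.

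I do not anticipate any real obstacle: the proposition is essentially a translation between two equivalent formulations of the same condition. The only thing to be careful about is to make sure the serial and transitive hypotheses are only used implicitly through Proposition~\ref{P:firsttypeofindependentsetofMR} (where they guarantee that $M(R)$ is a well-defined matroid), so that the present proof can just quote that result and unfold a definition without re-doing any matroid-theoretic work. I would write the argument as a double inclusion mirroring the structure of the previous proof to keep the presentation uniform, although a single chain of equivalences would suffice.
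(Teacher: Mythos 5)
Your proposal is correct and follows exactly the route the paper takes: the paper's own proof simply cites Definition~\ref{D:approximationoperators} and Proposition~\ref{P:firsttypeofindependentsetofMR} and calls the conclusion straightforward, which is precisely your observation that $L(I)=\emptyset$ unfolds to ``$RN(x)\nsubseteq I$ for all $x\in U$.'' Your write-up just makes explicit the one-line logical equivalence the paper leaves implicit.
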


\begin{proof}
According to Definition~\ref{D:approximationoperators} and Proposition~\ref{P:firsttypeofindependentsetofMR}, it is straightforward.
\end{proof}

Because of the duality of the lower and upper approximation operators, we obtain the independent sets of the matroid induced by a serial and transitive relation through the upper approximation operator.

\begin{corollary}
Let $R$ be a serial and transitive relation on $U$ and $M(R)$ the matroid induced by $R$.
Then,\\
\centerline{$\mathbf{I}(R)=\{I\subseteq U:H(I^{c})=U\}$.}
\end{corollary}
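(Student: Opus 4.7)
The plan is to derive this corollary as an immediate consequence of Proposition~\ref{P:secondtypeofindependentsetofMR} by applying the duality relation (4LH) in Proposition~\ref{P:propertiesofapproximations}. Since Proposition~\ref{P:secondtypeofindependentsetofMR} already characterizes $\mathbf{I}(R)$ as the family of subsets $I$ with $L(I)=\emptyset$, the only remaining task is to translate the condition $L(I)=\emptyset$ into an equivalent condition phrased in terms of $H$ applied to the complement.

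Concretely, I would start from an arbitrary $I\subseteq U$ and write $I=(I^{c})^{c}$, so that property (4LH) gives $L(I)=L((I^{c})^{c})=(H(I^{c}))^{c}$. Taking complements on both sides of $L(I)=\emptyset$ then yields $H(I^{c})=U$, and the reverse direction is the same chain of equalities read backward. Substituting this equivalence into the characterization from Proposition~\ref{P:secondtypeofindependentsetofMR} immediately produces $\mathbf{I}(R)=\{I\subseteq U:H(I^{c})=U\}$.

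Because the corollary is purely a dual reformulation, there is essentially no obstacle; the only minor point to be careful about is to invoke Proposition~\ref{P:propertiesofapproximations}~(4LH) in the correct direction (applied to $I^{c}$ rather than to $I$) so that the complement lands on $H$ rather than on $L$. Accordingly, the write-up can be a single short paragraph that cites Proposition~\ref{P:secondtypeofindependentsetofMR} together with (4LH) of Proposition~\ref{P:propertiesofapproximations} and notes that $L(I)=\emptyset$ is equivalent to $H(I^{c})=U$.
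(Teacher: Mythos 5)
Your proposal is correct and follows exactly the paper's own route: the paper likewise derives the corollary from Proposition~\ref{P:secondtypeofindependentsetofMR} together with (4LH) of Proposition~\ref{P:propertiesofapproximations}, and your explicit computation $L(I)=L((I^{c})^{c})=(H(I^{c}))^{c}$ is just the detail the paper leaves as ``straightforward.''
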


\begin{proof}
According to (4LH) of Proposition~\ref{P:propertiesofapproximations} and Proposition~\ref{P:secondtypeofindependentsetofMR}, it is straightforward.
\end{proof}

From Example~\ref{E:example1} and Example~\ref{E:example2}, we see that two different relations on a universe generate the same matroid.
We study under what conditions two relations on a universe can generate the same matroid.
If a relation is transitive, then the reflexive closure of the relation is transitive, but the symmetric closure of the relation is not always transitive.
Therefore, we consider the relationship between the matroids induced by a serial and transitive relation and the reflexive closure of the relation.
First, we introduce the reflexive closure of a relation on a nonempty universe.

\begin{definition}(Reflexive closure of a relation~\cite{RajagopalMason92Discrete})
\label{D:thereflexiveclosure}
Let $U$ be a nonempty universe and $R$ a relation on $U$. We denote $r(R)$ as the reflexive closure of $R$, where $r(R)=R\cup\{(x, x):x\in U\}$.
\end{definition}

The relationship between the two matroids induced by a serial and transitive relation and its reflexive closure is studied in the following proposition.

\begin{proposition}
Let $R$ be a serial and transitive relation on $U$.
Then $M(R)=M(r(R))$.
\end{proposition}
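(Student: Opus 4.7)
The plan is to reduce the matroid equality $M(R)=M(r(R))$ to the equality of circuit families $\mathbf{C}(R)=\mathbf{C}(r(R))$, since by Definition~\ref{D:amatroidinducedbyarelation} a matroid is determined by its circuits. First I would check that $r(R)$ is itself serial (immediate, since $(x,x)\in r(R)$) and transitive (by a short case split on whether each step comes from $R$ or the diagonal), so that $M(r(R))$ is actually defined. The crucial observation is that $RN_{r(R)}(x)=RN_R(x)\cup\{x\}$ for every $x\in U$, which differs from $RN_R(x)$ precisely when $x\notin RN_R(x)$. So the task becomes proving $FMIN\{RN_R(x):x\in U\}=FMIN\{RN_R(x)\cup\{x\}:x\in U\}$.

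The engine of the argument is the following key lemma about serial transitive relations: if $A$ is a minimal element of $\{RN_R(x):x\in U\}$, then $RN_R(y)=A$ for every $y\in A$, and in particular $y\in RN_R(y)$. The proof is a one-liner: write $A=RN_R(x)$; $A\neq\emptyset$ by seriality; for any $y\in A$ transitivity gives $RN_R(y)\subseteq RN_R(x)=A$; minimality forces $RN_R(y)=A$; and then $y\in A=RN_R(y)$. Because $r(R)$ is also serial and transitive, the same lemma applies verbatim with $R$ replaced by $r(R)$.

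For the inclusion $\mathbf{C}(R)\subseteq\mathbf{C}(r(R))$, given a minimal $A=RN_R(x)$ I would use the lemma to pick $y\in A$ with $y\in RN_R(y)$, so $RN_{r(R)}(y)=RN_R(y)=A$, placing $A$ in the other family. For minimality, any $B=RN_{r(R)}(z)\subseteq A$ satisfies $z\in B\subseteq A$ (since $z\in RN_{r(R)}(z)$ always), so the lemma gives $RN_R(z)=A$ and hence $B=RN_R(z)\cup\{z\}=A$. The reverse inclusion $\mathbf{C}(r(R))\subseteq\mathbf{C}(R)$ uses the lemma applied to $r(R)$: given a minimal $A=RN_{r(R)}(x_0)$, pick $y\in RN_R(x_0)\subseteq A$ (nonempty by seriality of $R$); transitivity of $R$ yields $RN_R(y)\subseteq RN_R(x_0)$, and the lemma for $r(R)$ yields $RN_{r(R)}(y)=A$, so $A=RN_R(y)\cup\{y\}\subseteq RN_R(x_0)\cup\{y\}\subseteq RN_R(x_0)$, forcing $A=RN_R(x_0)$. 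The same trick (pick a witness in $B$, apply the lemma for $r(R)$, use transitivity of $R$) gives minimality in $\{RN_R(x):x\in U\}$.

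The main obstacle is the second inclusion, where one must recover a genuine $R$-neighborhood from an $r(R)$-neighborhood even though $r(R)$ has strictly more pairs; the key lemma is precisely what forces the added self-loop $\{x\}$ to already lie inside $RN_R(x)$ whenever we are looking at a minimal element. Everything else is routine bookkeeping between $RN_R$ and $RN_{r(R)}=RN_R\cup\{\cdot\}$.
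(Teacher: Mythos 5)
Your proof is correct and follows essentially the same route as the paper's: reduce to the equality of circuit families $\mathbf{C}(R)=\mathbf{C}(r(R))$, observe that $RN_{r(R)}(x)=RN_{R}(x)\cup\{x\}$, and exploit the fact that seriality, transitivity and minimality together force $RN_{R}(y)=RN_{R}(x)\ni y$ for every $y$ in a minimal neighborhood $RN_{R}(x)$. Isolating that fact as a reusable lemma and applying it to both $R$ and $r(R)$ makes your write-up cleaner than the paper's (which buries the same argument inside case splits and a proof by contradiction, and is terser about checking minimality in the target family), but the underlying idea is identical.
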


\begin{proof}
According to Proposition~\ref{P:circuitaxiom}, we need only to prove $\mathbf{C}(R)=\mathbf{C}(r(R))$.
According to Definition~\ref{D:thesetfamily}, $\mathbf{C}(R)=FMIN\{RN_{R}(x):x\in U\}$ and $\mathbf{C}(r(R))=FMIN\{RN_{r(R)}(x):x\in U\}$.
According to Definition~\ref{D:thereflexiveclosure}, $r(R)=R\cup\{(x, x):x\in U\}$, then $\mathbf{C}(r(R))=FMIN\{RN_{R}(x)\cup\{x\}:x\in U\}$.\\
$(\subseteq)$: For all $RN_{R}(x)\in\mathbf{C}(R)$, on one hand, $x\in RN_{R}(x)$, then $RN_{R}(x)\in\mathbf{C}(r(R))$.
On the other hand, $x\notin RN_{R}(x)$.
Since $R$ is serial, then there exists $y\in U$ such that $y\in RN_{R}(x)$.
Since $R$ is transitive, then $RN_{R}(y)\subseteq RN_{R}(x)$.
And $RN_{R}(x)\in\mathbf{C}(R)$, then $y\in RN_{R}(y)=RN_{R}(x)$.
Therefore, $\{y\}\cup RN_{R}(y)\subset \{x\}\cup RN_{R}(x)$, i.e., $RN_{R}(y)\in\mathbf{C}(r(R))$.
In other words, $RN_{R}(x)\in\mathbf{C}(r(R))$.
Hence $\mathbf{C}(R)\subseteq\mathbf{C}(r(R))$.\\
$(\supseteq)$: For all $RN_{r(R)}(x)\in\mathbf{C}(r(R))$, i.e., $RN_{R}(x)\cup\{x\}\in\mathbf{C}(r(R))$.
If $x\in RN_{R}(x)$, then $RN_{r(R)}(x)\in\mathbf{C}(R)$.
If $x\notin RN_{R}(x)$, i.e., $RN_{R}(x)\subset RN_{r(R)}(x)$, then $RN_{R}(x)\in\mathbf{C}(R)$.
If $RN_{R}(x)\notin\mathbf{C}(R)$, then there exists $y\in U$ such that $RN_{R}(y)\subset RN_{R}(x)$ and $RN_{R}(y)\in\mathbf{C}(R)$.
Since $R$ is serial, then there exists $z\in U$ such that $z\in RN_{R}(y)$.
And $R$ is transitive, then $RN_{R}(z)\subseteq RN_{R}(y)$.
Therefore $z\in RN_{R}(z)=RN_{R}(y)$, i.e., $RN_{R}(z)\cup\{z\}=RN_{R}(y)$.
Hence $RN_{R}(z)\cup\{z\}\subset RN_{R}(x)\subset RN_{R}(x)\cup\{x\}$ which is contradictory with $RN_{R}(x)\cup\{x\}\in\mathbf{C}(r(R))$.
Since $R$ is serial, then there exists $y\in U$ such that $y\in RN_{R}(x)$.
Since $R$ is transitive, then $RN_{R}(y)\subseteq RN_{R}(x)$.
And $RN_{R}(x)\in\mathbf{C}(R)$, then $y\in RN_{R}(y)=RN_{R}(x)$, i.e., $\{y\}\cup RN_{R}(y)\subset \{x\}\cup RN_{R}(x)$ which is contradictory with $RN_{R}(x)\cup\{x\}\in\mathbf{C}(r(R))$.
Hence $RN_{r(R)}(x)\in\mathbf{C}(R)$, i.e., $\mathbf{C}(r(R))\subseteq\mathbf{C}(R)$.
\end{proof}

In fact, for a universe, any different relations generate the same matroid if and only if the families of all minimal neighborhoods are equal to each other according to Proposition~\ref{P:circuitaxiom}.

It is known the upper approximation operator induced by a reflexive and transitive relation is exactly the closure operator of a topology~\cite{Kortelainen94onrelationship,QinYangPei08generalizedroughsets,YangXu11topologicalproperties}.
We see a matroid has its own closure operator.
In this paper, we will not discuss the relationship between the closure operator of a topology and one of a matroid.
We will study the connection between the upper approximation operator of a rough set and the closure operator of a matroid.
In the following, we investigate the relationship between the upper approximation operator of a relation and the closure operator of the matroid induced by the relation.

In a matroid, the closure of any subset contains the subset itself.
When a relation on a universe is reflexive,  the upper approximation of any subset contains the subset itself.
If a relation is reflexive, then it is serial.
According to Definition~\ref{D:thesetfamily}, a reflexive and transitive relation can induce a matroid.
We will study the relationship between the upper approximation operator of a reflexive and transitive relation and the closure operator of the matroid induced by the relation.
A counterexample is given in the following.

\begin{example}
Let $U=\{1, 2, 3\}$ and $R=\{(1, 1), (1, 3), (2, 2), (2, 3), (3, 3)\}$ a reflexive and transitive relation on $U$.
Since $RN(1)=\{1, 3\}, RN(2)=\{2, 3\}, RN(3)=\{3\}$, then $\mathbf{C}(R)=\{\{3\}\}$.
Therefore, $cl_{M(R)}(\emptyset)=\{3\}, cl_{M(R)}(\{3\})=\{3\}$.
Since $H_{R}(\emptyset)=\emptyset, H_{R}(\{3\})=\{1, 2, 3\}$, then $H_{R}(\emptyset)\subseteq cl_{M(R)}(\emptyset)$ and $H_{R}(\{3\})\supseteq cl_{M(R)}(\{3\})$.
\end{example}

From the above example, we see that the closure operator of the matroid induced by a reflexive and transitive relation dose not correspond to the upper approximation operator of the relation.
A condition, when the closure operator contains the upper approximation operator, is studied in the following proposition.
First, we present a remark.

\begin{remark}
Suppose $R$ is an equivalence relation on $U$.
According to Definition~\ref{D:approximationoperators} and Proposition~\ref{P:reflexiverelation}, $H_{R}(X)=X\cup\{u\in X^{c}:\exists x\in X$ s.t. $x\in RN_{R}(u)\}$.
According to Definition~\ref{D:thesetfamily}, $\mathbf{C}(R)=\{RN_{R}(x):x\in U\}$.
\end{remark}

\begin{proposition}
\label{P:closureofinducedmatroidcontainsupper}
Let $R$ be an equivalence relation on $U$ and $M(R)$ the matroid induced by $R$.
If for all $C\in\mathbf{C}(R), |C|\leq 2$, then $H_{R}(X)\subseteq cl_{M(R)}(X)$ for all $X\subseteq U$.
\end{proposition}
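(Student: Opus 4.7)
The plan is to start from the characterizations provided in the remark immediately preceding the proposition, which rewrite both $H_R$ and $\mathbf{C}(R)$ in a form that makes them directly comparable with Definition~\ref{D:closure}. Fix $X\subseteq U$ and pick an arbitrary $u\in H_R(X)$; the goal is to show $u\in cl_{M(R)}(X)$. I would immediately split into the two cases given by the remark's formula for $H_R(X)$: either $u\in X$, in which case $u\in cl_{M(R)}(X)$ trivially since the closure contains the set itself, or $u\in X^{c}$ and there exists $x\in X$ with $x\in RN_R(u)$.

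The substantive case is the second one, and the key step is to exhibit a circuit $C\in\mathbf{C}(R)$ witnessing membership of $u$ in $cl_{M(R)}(X)$, i.e., such that $u\in C\subseteq X\cup\{u\}$. The natural candidate is $C:=RN_R(u)$, which belongs to $\mathbf{C}(R)$ by the remark. I would verify $u\in C$ from reflexivity of $R$ (an equivalence relation is reflexive) and $x\in C$ directly from the hypothesis. Since $u\in X^{c}$ and $x\in X$ we have $u\neq x$, so $|C|\geq 2$; combined with the hypothesis $|C|\leq 2$ this forces $C=\{x,u\}$. Because $x\in X$, we get $C\subseteq X\cup\{u\}$, and Definition~\ref{D:closure} then gives $u\in cl_{M(R)}(X)$.

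There is no real obstacle here: the whole argument is a one-line unpacking of definitions, and the condition $|C|\leq 2$ is precisely what is needed to squeeze the equivalence class of $u$ into the set $X\cup\{u\}$ demanded by the matroid closure. It is worth noting (but not essential to the proof) that the hypothesis on circuit size cannot be dropped, since otherwise an equivalence class meeting $X$ non-trivially could contain additional elements outside $X\cup\{u\}$ and no matroid circuit inside $X\cup\{u\}$ would witness $u$ in the closure; this matches the phenomenon illustrated in the example preceding the remark.
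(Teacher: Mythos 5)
Your proof is correct and takes essentially the same route as the paper's: both reduce the claim to showing that any $u\in X^{c}$ with $X\cap RN_{R}(u)\neq\emptyset$ has $RN_{R}(u)\subseteq X\cup\{u\}$, using $RN_{R}(u)$ itself as the witnessing circuit and the bound $|RN_{R}(u)|\leq 2$ together with $u\in RN_{R}(u)$ to force $RN_{R}(u)=\{x,u\}$. Your element-wise phrasing is in fact a little cleaner than the paper's case split on $|RN_{R}(x)|=1$ versus $|RN_{R}(x)|=2$, but it is the same argument.
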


\begin{proof}
According to Definition~\ref{D:closure}, we need only to prove that $\{u\in X^{c}:\exists x\in X$ s.t. $x\in RN_{R}(u)\}\subseteq\{u\in X^{c}:RN_{R}(u)\subseteq X\cup\{u\}\}$.
Since $R$ is an equivalence relation, and for all $C\in\mathbf{C}(R), |C|\leq 2$, then $x\in RN_{R}(x)$ and $|RN_{R}(x)|\leq 2$ for all $x\in U$.
When $|RN_{R}(x)|=1$, i.e., $RN_{R}(x)=\{x\}$, then $\{u\in X^{c}:\exists x\in X$ s.t. $x\in RN_{R}(u)\}=\emptyset$, therefore $\{u\in X^{c}:\exists x\in X$ s.t. $x\in RN_{R}(u)\}\subseteq\{u\in X^{c}:RN_{R}(u)\subseteq X\cup\{u\}\}$.
When $|RN_{R}(x)|=2$, for all $u\in\{u\in X^{c}:\exists x\in X$ s.t. $x\in RN_{R}(u)\}, RN_{R}(u)=\{x, u\}$, then $RN_{R}(u)\subseteq X\cup\{u\}$, i.e., $\{u\in X^{c}:\exists x\in X$ s.t. $x\in RN_{R}(u)\}\subseteq\{u\in X^{c}:RN_{R}(u)\subseteq X\cup\{u\}\}$.
To sum up, this completes the proof.
\end{proof}

In fact, for an equivalence relation on a universe and any subset of the universe, its closure with respect to the induced matroid can be expressed by the union of its upper approximation with respect to the relation and the family of some elements whose neighborhood is equal to itself.

\begin{proposition}
Let $R$ be an equivalence relation on $U$ and $M(R)$ the matroid induced by $R$.
If for all $C\in\mathbf{C}(R), |C|\leq 2$, then $cl_{M(R)}(X)=H_{R}(X)\cup\{u\in X^{c}: RN_{R}(u)=\{u\}\}$ for all $X\subseteq U$.
\end{proposition}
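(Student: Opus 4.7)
The plan is to establish the equality by comparing element by element the elements that lie outside $X$ in each side, using the hypothesis $|C|\leq 2$ and the equivalence property to simplify the closure formula. Since $R$ is reflexive, every $u\in U$ belongs to its own neighborhood $RN_R(u)$, and by the remark we have $\mathbf{C}(R)=\{RN_R(x):x\in U\}$. So by Definition~\ref{D:closure}, for $u\in X^{c}$ the condition $u\in cl_{M(R)}(X)$ amounts to the existence of some $x\in U$ with $u\in RN_R(x)\subseteq X\cup\{u\}$. Because $R$ is an equivalence relation, $u\in RN_R(x)$ forces $RN_R(x)=RN_R(u)$, so the condition collapses to $RN_R(u)\subseteq X\cup\{u\}$.

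Next, I would use the hypothesis $|C|\leq 2$ for all $C\in\mathbf{C}(R)$, which, combined with $u\in RN_R(u)$, yields either $RN_R(u)=\{u\}$ or $RN_R(u)=\{u,x\}$ for some $x\neq u$. I split the analysis into these two cases for $u\in X^{c}$. In the first case $RN_R(u)\subseteq X\cup\{u\}$ holds automatically, contributing exactly the set $\{u\in X^{c}:RN_R(u)=\{u\}\}$ to $cl_{M(R)}(X)\setminus X$. In the second case $RN_R(u)\subseteq X\cup\{u\}$ is equivalent to $x\in X$, i.e.\ to $RN_R(u)\cap X\neq\emptyset$, which is precisely the condition $u\in H_R(X)$ for $u\in X^{c}$.

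Combining the two cases gives
\begin{equation*}
cl_{M(R)}(X)=X\cup\{u\in X^{c}:RN_R(u)=\{u\}\}\cup\{u\in X^{c}:|RN_R(u)|=2,\ RN_R(u)\cap X\neq\emptyset\}.
\end{equation*}
On the other side, since $R$ is reflexive we have $X\subseteq H_R(X)$ by Proposition~\ref{P:reflexiverelation}, so $H_R(X)=X\cup\{u\in X^{c}:RN_R(u)\cap X\neq\emptyset\}$; and any $u\in X^{c}$ with $RN_R(u)\cap X\neq\emptyset$ automatically satisfies $|RN_R(u)|\geq 2$, hence $=2$ under the hypothesis. Thus $H_R(X)\cup\{u\in X^{c}:RN_R(u)=\{u\}\}$ coincides with the expression above.

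The mild obstacle, and really the only thing to watch, is keeping the case $u\in X^{c}$ with $RN_R(u)=\{u\}$ separate from the $H_R$-contribution: such a $u$ lies in $cl_{M(R)}(X)$ but not in $H_R(X)$ (since $u\notin X$), which is precisely why the extra term $\{u\in X^{c}:RN_R(u)=\{u\}\}$ has to be appended to $H_R(X)$ on the right-hand side. Once this bookkeeping is made explicit, the equality follows immediately from the case analysis and Proposition~\ref{P:closureofinducedmatroidcontainsupper}.
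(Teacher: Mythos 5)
Your proposal is correct and follows essentially the same route as the paper: both reduce the claim to the set identity $\{u\in X^{c}:RN_{R}(u)\subseteq X\cup\{u\}\}=\{u\in X^{c}:\exists x\in X \mbox{ s.t. } x\in RN_{R}(u)\}\cup\{u\in X^{c}: RN_{R}(u)=\{u\}\}$ and then use $u\in RN_R(u)$ together with $|RN_R(u)|\leq 2$; the paper merely declares this last step ``straightforward,'' whereas you carry out the two-case analysis explicitly (the closing appeal to Proposition~\ref{P:closureofinducedmatroidcontainsupper} is unnecessary but harmless).
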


\begin{proof}
We need only to prove $\{u\in X^{c}:RN_{R}(u)\subseteq X\cup\{u\}\}=\{u\in X^{c}:\exists x\in X$ s.t. $x\in RN_{R}(u)\}\cup\{u\in X^{c}: RN_{R}(u)=\{u\}\}$.
Since $R$ is equivalence relation and $|C|\leq 2$ for all $C\in\mathbf{C}(R)$, then $|RN_{R}(x)|\leq 2$ for all $x\in U$.
Therefore, it is straightforward to obtain $\{u\in X^{c}:RN_{R}(u)\subseteq X\cup\{u\}\}=\{u\in X^{c}:\exists x\in X$ s.t. $x\in RN_{R}(u)\}\cup\{u\in X^{c}: RN_{R}(u)=\{u\}\}$, i.e., $cl_{M(R)}(X)=H_{R}(X)\cup\{u\in X^{c}: RN_{R}(u)=\{u\}\}$.
\end{proof}

Similarly, can the upper approximation operator of an equivalence relation contain the closure operator of the matroid induced by the equivalence relation when the cardinality of any circuit of the matroid is equal or greater than 2?

\begin{proposition}
\label{P:uppercontainsclosureofinducedmatroid}
Let $R$ be an equivalence relation on $U$ and $M(R)$ the matroid induced by $R$.
If for all $C\in\mathbf{C}(R), |C|\geq 2$, then $cl_{M(R)}(X)\subseteq H_{R}(X)$ for all $X\subseteq U$.
\end{proposition}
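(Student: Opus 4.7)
The plan is to unpack the closure on the left, apply the cardinality hypothesis to extract a witness element, and then use the equivalence relation structure to show that this witness lies in the successor neighborhood, which puts $u$ inside $H_R(X)$.

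First, I would take an arbitrary $u \in cl_{M(R)}(X)$ and argue on cases. If $u \in X$, then $u \in H_R(X)$ is immediate from Proposition~\ref{P:reflexiverelation} since $R$, being an equivalence relation, is reflexive. So the nontrivial case is $u \in X^{c}$. By Definition~\ref{D:closure}, there exists a circuit $C \in \mathbf{C}(R)$ with $u \in C \subseteq X \cup \{u\}$. Using the remark preceding Proposition~\ref{P:closureofinducedmatroidcontainsupper}, every circuit of $M(R)$ is of the form $RN_{R}(y)$ for some $y \in U$; combining symmetry and transitivity of $R$ with $u \in RN_{R}(y)$, I would show $RN_{R}(y) = RN_{R}(u)$, hence $C = RN_{R}(u)$.

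Next I would use the hypothesis $|C| \geq 2$ to pick some $x \in C$ with $x \neq u$. Since $C \subseteq X \cup \{u\}$, such an $x$ must lie in $X$. Combined with $x \in C = RN_{R}(u)$, this gives an element of $X$ inside $RN_{R}(u)$, so by the formula for $H_R$ given in the remark, $u \in H_{R}(X)$. Putting the two cases together yields $cl_{M(R)}(X) \subseteq H_{R}(X)$.

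The only subtle step is the identification $C = RN_{R}(u)$, which relies essentially on $R$ being an equivalence relation rather than merely serial and transitive; this is where one would invoke the symmetry (to pass from $u \in RN_R(y)$ to $y \in RN_R(u)$) together with transitivity (to deduce mutual inclusion of the neighborhoods). Everything else is a direct unpacking of definitions, so I expect no real obstacle beyond carefully writing out this identification.
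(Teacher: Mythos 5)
Your proof is correct and follows essentially the same route as the paper: reduce to the non-trivial elements $u\in X^{c}$, identify the witnessing circuit with $RN_{R}(u)$, and use $|RN_{R}(u)|\geq 2$ together with $RN_{R}(u)\subseteq X\cup\{u\}$ to produce an element of $X\cap RN_{R}(u)$. In fact you make explicit the one step the paper elides, namely the justification that the circuit $C$ containing $u$ must equal $RN_{R}(u)$ (via symmetry and transitivity of the equivalence relation), so no gap remains.
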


\begin{proof}
According to Definition~\ref{D:closure}, we need only to prove $\{u\in X^{c}:\exists C\in\mathbf{C}(R)$ s.t. $u\in C\subseteq X\cup\{u\}\}\subseteq\{u\in X^{c}:\exists x\in X$ s.t. $x\in RN_{R}(u)\}$.
For all $u\in\{u\in X^{c}:\exists C\in\mathbf{C}(R)$ s.t. $u\in C\subseteq X\cup\{u\}\}$, then $RN_{R}(u)\subseteq X\cup\{u\}$.
And $|RN_{R}(u)|\geq 2$, then there exists at least an element $x\in X$ such that $x\in RN_{R}(u)$, i.e., $u\in\{u\in X^{c}:\exists x\in X$ s.t. $x\in RN_{R}(u)\}$.
To sum up, this completes the proof.
\end{proof}

A sufficient and necessary condition, when the upper approximation operator of an equivalence relation is equal to the closure operator of the matroid induced by the equivalence relation, is investigated in the following theorem.
First, we introduce a special matroid called 2-circuit matroid.

\begin{definition}(2-circuit matroid~\cite{WangZhuZhuMin12matroidalstructure})
\label{D:2-circuitmatroid}
Let $M=(U, \mathbf{I})$ be a matroid.
If for all $C\in\mathbf{C}(M), |C|=2$, then we say $M$ is a 2-circuit matroid.
\end{definition}

\begin{theorem}
Let $R$ be an equivalence relation on $U$ and $M(R)$ the matroid induced by $R$.
$M(R)$ is a 2-circuit matroid iff $H_{R}(X)=cl_{M(R)}(X)$ for all $X\subseteq U$.
\end{theorem}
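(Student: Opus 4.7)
The plan is to prove the two directions separately, using the two comparison propositions already in hand for the forward direction and a contrapositive argument for the converse.

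For the ($\Rightarrow$) direction, if $M(R)$ is a $2$-circuit matroid then every $C\in\mathbf{C}(R)$ satisfies $|C|=2$, so in particular $|C|\le 2$ and $|C|\ge 2$. I would then apply Proposition~\ref{P:closureofinducedmatroidcontainsupper} to get $H_R(X)\subseteq cl_{M(R)}(X)$ and Proposition~\ref{P:uppercontainsclosureofinducedmatroid} to get $cl_{M(R)}(X)\subseteq H_R(X)$, and combine. This direction is essentially bookkeeping.

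For the ($\Leftarrow$) direction, I would argue by contrapositive: assume $M(R)$ is not a $2$-circuit matroid, so some circuit $C\in\mathbf{C}(R)$ has $|C|\ne 2$. Since $R$ is an equivalence relation, the preceding remark tells us $\mathbf{C}(R)=\{RN_R(x):x\in U\}$ and each $RN_R(x)$ is the equivalence class of $x$; in particular circuits are pairwise disjoint and every element of $U$ lies in exactly one circuit. I split into two cases and exhibit a witness $X$ with $H_R(X)\ne cl_{M(R)}(X)$.

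In the case $|C|=1$, write $C=\{x\}$; then $RN_R(x)=\{x\}$. Take $X=\emptyset$: clearly $H_R(\emptyset)=\emptyset$, but $x\in cl_{M(R)}(\emptyset)$ because $x\in C\subseteq\emptyset\cup\{x\}$. In the case $|C|\ge 3$, pick distinct $x,u\in C$ and set $X=\{x\}$. Since $R$ is an equivalence relation, $RN_R(u)=C\ni x$, so $u\in H_R(\{x\})$. However, $u\in cl_{M(R)}(\{x\})$ would require some circuit $C'\in\mathbf{C}(R)$ with $u\in C'\subseteq\{x,u\}$; since $u\in C'\cap C$ and distinct circuits are disjoint, $C'=C$, contradicting $|C|\ge 3$. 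Hence $u\in H_R(X)\setminus cl_{M(R)}(X)$, completing the contrapositive.

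The only place that needs care is the case $|C|\ge 3$, where I rely on the disjointness of circuits of $\mathbf{C}(R)$ for an equivalence relation $R$ (established in the proof of Proposition~\ref{P:CRsatisfiescircuitaxiom}) to rule out the existence of an alternative two-element circuit containing both $x$ and $u$; everything else reduces to unwinding the definitions of $H_R$ and $cl_{M(R)}$.
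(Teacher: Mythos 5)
Your proposal is correct and follows essentially the same route as the paper: the forward direction combines Propositions~\ref{P:closureofinducedmatroidcontainsupper} and~\ref{P:uppercontainsclosureofinducedmatroid}, and the converse is a contrapositive with the same two cases and the same witnesses (taking $X=\emptyset$ when some circuit is a singleton, and a set meeting a circuit of size at least $3$ in exactly one point otherwise). Your explicit appeal to the disjointness of circuits in the $|C|\ge 3$ case is just a slightly more detailed justification of the step the paper states directly.
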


\begin{proof}
According to Definition~\ref{D:closure},we need only to prove for all $x\in U$, $|RN_{R}(x)|=2$ iff $\{u\in X^{c}:\exists x\in X$ s.t. $x\in RN_{R}(u)\}=\{u\in X^{c}:RN_{R}(u)\subseteq X\cup\{u\}\}$ for all $X\subseteq U$.\\
$(\Rightarrow)$: According to Proposition~\ref{P:closureofinducedmatroidcontainsupper} and Proposition~\ref{P:uppercontainsclosureofinducedmatroid}, it is straightforward.\\
$(\Leftarrow)$: We prove this by reductio.\\
On one hand, suppose there exists $x\in U$ such that $|RN_{R}(x)|=1$.
Suppose $X=\emptyset$.
Then $x\in\{u\in X^{c}:RN_{R}(u)\subseteq X\cup\{u\}\}$.
According to (1H) of Proposition~\ref{P:propertiesofapproximations}, $H_{R}(\emptyset)=\emptyset$.
Therefore $H_{R}(\emptyset)\subset cl_{M(R)}(\emptyset)$ which is contradictory with $H_{R}(X)=cl_{M(R)}(X)$ for all $X\subseteq U$.\\
On the other hand, suppose there exists $y\in U$ such that $|RN_{R}(y)|\geq 3$.
Suppose $X\cap RN_{R}(y)=\{x\}$, then $y\in\{u\in X^{c}:\exists x\in X$ s.t. $x\in RN_{R}(u)\}$.
Since $y\in RN_{R}(y)\nsubseteq X$, then $RN_{R}(y)\nsubseteq X\cup\{y\}$, which is contradictory with $\{u\in X^{c}:\exists x\in X$ s.t. $x\in RN_{R}(u)\}=\{u\in X^{c}:RN_{R}(u)\subseteq X\cup\{u\}\}$ for all $X\subseteq U$.
Therefore, for all $x\in U$, $|RN_{R}(x)|=2$.
\end{proof}

\subsection{Construction of equivalence relation by matroid}
\label{S:inductionofrelationbymatroid}
In order to further study the matroidal structure of the rough set based on a serial and transitive relation, we consider the inverse of the construction in Subsection~\ref{S:inductionofmatroidbyrelation}: inducing a relation by a matroid.
Firstly, through the connectedness in a matroid, a relation can be obtained.

\begin{definition}(\cite{Oxley93Matroid})
\label{D:relationinducedbymatroid}
Let $M=(U, \mathbf{I})$ be a matroid.
We define a relation $R(M)$ on $U$ as follows: for all $x, y\in U$,\\
\centerline{$(x, y)\in R(M)\Leftrightarrow x=y$ or $\exists C\in\mathbf{C}(M)$ s.t. $\{x, y\}\subseteq C$.}
We say $R(M)$ is induced by $M$.
\end{definition}

The following example is to illustrate the construction of a relation from a matroid.

\begin{example}
Let $M=(U, \mathbf{I})$ be a matroid, where $U=\{1, 2, 3\}$ and $\mathbf{I}=\{\emptyset, \{1\}\}$.
Since $\mathbf{C}(M)=\{\{2\}, \{3\}\}$, then according to Definition~\ref{D:relationinducedbymatroid}, $R(M)=\{(1, 1), (2, 2), (3, 3)\}$.
\end{example}

In fact, according to Definition~\ref{D:relationinducedbymatroid}, the relation induced by a matroid is an equivalence relation.

\begin{proposition}(\cite{Oxley93Matroid})
\label{P:theinducedrelationisequivalencerelation}
Let $M=(U, \mathbf{I})$ be a matroid.
Then $R(M)$ is an equivalence relation on $U$.
\end{proposition}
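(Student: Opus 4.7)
The plan is to verify the three defining properties of an equivalence relation—reflexivity, symmetry, and transitivity—for $R(M)$. Reflexivity is immediate from the $x=y$ clause in Definition~\ref{D:relationinducedbymatroid}: taking $y=x$ gives $(x,x)\in R(M)$ for every $x\in U$. Symmetry is equally routine: if $(x,y)\in R(M)$ with $x\neq y$, then a circuit $C$ with $\{x,y\}\subseteq C$ also satisfies $\{y,x\}\subseteq C$, so $(y,x)\in R(M)$; the case $x=y$ is trivial.

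The substantive step is transitivity. Assume $(x,y),(y,z)\in R(M)$ and, without loss of generality, that $x,y,z$ are pairwise distinct (all other cases reduce to reflexivity or are immediate). Pick circuits $C_{1}\ni\{x,y\}$ and $C_{2}\ni\{y,z\}$. If $C_{1}=C_{2}$, or if $x\in C_{2}$, or if $z\in C_{1}$, then a single circuit already contains $\{x,z\}$ and we are done. Otherwise $x\notin C_{2}$, $z\notin C_{1}$, $C_{1}\neq C_{2}$, and $y\in C_{1}\cap C_{2}$. The circuit axiom (C3) of Proposition~\ref{P:circuitaxiom} then produces a circuit $C_{3}\subseteq (C_{1}\cup C_{2})\setminus\{y\}$, but this alone does not guarantee $\{x,z\}\subseteq C_{3}$, which is precisely what we need.

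The hard part, and the main obstacle I anticipate, is closing this gap. I would invoke the stronger circuit elimination property of matroids: for any $e\in C_{1}\cap C_{2}$ and any prescribed $f\in C_{1}\setminus C_{2}$, there exists a circuit $C'$ with $f\in C'\subseteq (C_{1}\cup C_{2})\setminus\{e\}$. Applying this with $e=y$ and $f=x$ yields a circuit containing $x$ and avoiding $y$; if this circuit also contains $z$ we are done, and otherwise I would iterate the elimination against $C_{2}$ and argue by induction on $|C_{1}\cup C_{2}|$ that a circuit containing both $x$ and $z$ must eventually be produced. Once such a circuit is exhibited, $(x,z)\in R(M)$ follows directly from Definition~\ref{D:relationinducedbymatroid}. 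The bulk of the work, and the only non-routine portion, lies in establishing or quoting this strong circuit elimination and executing the inductive step cleanly.
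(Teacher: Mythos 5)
The paper offers no proof of this proposition at all: it is quoted directly from \cite{Oxley93Matroid} (the standard fact that the circuit-connectedness relation on the ground set of a matroid is an equivalence relation), so your attempt can only be measured against that textbook argument. Your treatment of reflexivity and symmetry is correct and complete, and you have correctly diagnosed that the weak elimination axiom (C3) of Proposition~\ref{P:circuitaxiom} is insufficient for transitivity and that strong circuit elimination is the right tool.

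There is nonetheless a genuine gap in the transitivity step, in two places. First, strong circuit elimination is not among the axioms available in the paper --- only (C3) is stated --- so it would itself have to be proved (a nontrivial lemma, usually derived from the independence axioms). Second, and more seriously, the induction you promise ``on $|C_{1}\cup C_{2}|$'' does not run the way you suggest. After one application of strong elimination at $y\in C_{1}\cap C_{2}$ with prescribed element $x\in C_{1}\setminus C_{2}$, you obtain a circuit $C_{3}$ with $x\in C_{3}\subseteq(C_{1}\cup C_{2})\setminus\{y\}$; one can show $C_{3}\cap C_{2}\neq\emptyset$ (otherwise $C_{3}\subsetneq C_{1}$, violating (C2)), but the new pair satisfies only $C_{3}\cup C_{2}\subseteq C_{1}\cup C_{2}$, and this containment need not be strict because the deleted element $y$ still lies in $C_{2}$. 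So the natural measure does not decrease and your iteration has no visible termination. The actual proof (Oxley, Prop.~4.1.2) instead takes a counterexample pair minimizing $|C_{1}\cup C_{2}|$, uses minimality to force $C_{3}\cup C_{2}=C_{1}\cup C_{2}$, and then needs further elimination steps and a counting argument to reach a contradiction. Until that core is carried out (or explicitly quoted, as the paper does), the transitivity claim remains unproved.
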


The following example is presented to illustrate that different matroids generate the same relation.

\begin{example}
Let $M_{1}=(U, \mathbf{I}_{1}), M_{2}=(U, \mathbf{I}_{2})$ be two matroids where $U=\{1, 2, 3\}, \mathbf{I}_{1}=\{\emptyset, \{1\}, \{2\}\}$ and $\mathbf{I}_{2}=\{\emptyset, \{1\}, \{2\}, \{3\}, \{1, 3\}, \{2, 3\}\}$.
Since $\mathbf{C}(M_{1})=\{\{1, 2\}, \{3\}\},$ $ \mathbf{C}(M_{2})=\{\{1, 2\}\}$, according to Definition~\ref{D:relationinducedbymatroid}, then $R(M_{1})=R(M_{2})=\{(1, 1), (1, 2),$ $ (2, 1), (2,$ $ 2), (3, 3)\}$.
\end{example}

Similarly, we will study the relationship between the closure operator of a matroid and the upper approximation operator of the equivalence relation induced by the matroid in the following proposition.

\begin{proposition}
\label{P:closureofmatroidcontainedbyupperoftheinducedrelation}
Let $M=(U, \mathbf{I})$ be a matroid and $R(M)$ the relation induced by $M$.
If $cl_{M}(\emptyset)=\emptyset$, then $cl_{M}(X)\subseteq H_{R(M)}(X)$ for all $X\subseteq U$.
\end{proposition}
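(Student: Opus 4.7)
The plan is to unfold both sides of the claimed inclusion directly in terms of circuits of $M$, and observe that the hypothesis $cl_{M}(\emptyset)=\emptyset$ is exactly what forbids singleton circuits (loops); this ensures that every circuit witnessing membership in $cl_{M}(X)$ has at least one ``other'' element that lies in $X$, which in turn produces the required neighborhood witness for $H_{R(M)}(X)$.

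First I would unwind notation. By Definition~\ref{D:closure}, a point $u$ belongs to $cl_{M}(X)$ iff either $u\in X$ or there exists $C\in\mathbf{C}(M)$ with $u\in C\subseteq X\cup\{u\}$. By Definition~\ref{D:relationinducedbymatroid}, $RN_{R(M)}(u)=\{u\}\cup\{y\in U:\exists C\in\mathbf{C}(M)\text{ s.t. }\{u,y\}\subseteq C\}$, and by Definition~\ref{D:approximationoperators}, $u\in H_{R(M)}(X)$ iff $RN_{R(M)}(u)\cap X\neq\emptyset$. Since $R(M)$ is reflexive (indeed an equivalence relation by Proposition~\ref{P:theinducedrelationisequivalencerelation}), Proposition~\ref{P:reflexiverelation} gives $X\subseteq H_{R(M)}(X)$, so the case $u\in X$ is immediate.

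Next I would extract the content of the hypothesis. The expansion of $cl_{M}(\emptyset)$ in Definition~\ref{D:closure} reads $cl_{M}(\emptyset)=\{u\in U:\exists C\in\mathbf{C}(M)\text{ s.t. }u\in C\subseteq\{u\}\}$, i.e., the set of $u$ for which $\{u\}$ itself is a circuit. So $cl_{M}(\emptyset)=\emptyset$ is equivalent to saying $M$ has no singleton circuit (no loop). This is the only way the hypothesis enters.

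Finally, for the nontrivial case, suppose $u\in cl_{M}(X)\setminus X$. Pick $C\in\mathbf{C}(M)$ with $u\in C\subseteq X\cup\{u\}$. By the hypothesis, $|C|\geq 2$, so $C\setminus\{u\}$ is nonempty; any $y\in C\setminus\{u\}$ lies in $X$ and satisfies $\{u,y\}\subseteq C$, hence $(u,y)\in R(M)$, i.e., $y\in RN_{R(M)}(u)\cap X$, giving $u\in H_{R(M)}(X)$. I do not foresee any real obstacle; the only subtlety worth flagging is the translation of $cl_{M}(\emptyset)=\emptyset$ into ``no singleton circuit,'' which is precisely what keeps $C\setminus\{u\}$ nonempty and drives the argument.
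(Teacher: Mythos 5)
Your proof is correct and follows essentially the same route as the paper's: both translate $cl_{M}(\emptyset)=\emptyset$ into the absence of singleton circuits, and then use a circuit $C$ with $u\in C\subseteq X\cup\{u\}$ and $|C|\geq 2$ to produce an element $x\in C\cap X$ witnessing $x\in RN_{R(M)}(u)$, hence $u\in H_{R(M)}(X)$. The only cosmetic difference is that you handle the case $u\in X$ explicitly via reflexivity of $R(M)$, whereas the paper absorbs it into the standard decomposition of $H_{R}$ for an equivalence relation.
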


\begin{proof}
Since $R(M)$ is an equivalence relation, according to Definition~\ref{D:closure}, we need only to prove $\{u\in X^{c}:\exists C\in\mathbf{C}(M)$ s.t. $u\in C\subseteq X\cup\{u\}\}\subseteq\{u\in X^{c}:\exists x\in X$ s.t. $x\in RN_{R(M)}(u)\}$.
Since $cl_{M}(\emptyset)=\emptyset$, then $\{x\}\notin\mathbf{C}(M)$ for all $x\in U$.
For all $u\in\{u\in X^{c}:\exists C\in\mathbf{C}(M)$ s.t. $u\in C\subseteq X\cup\{u\}\}$, there exists at least one element $x\in X$ such that $\{x, u\}\subseteq C\subseteq X\cup\{u\}$.
According to Definition~\ref{D:relationinducedbymatroid}, $(u, x)\in R(M)$, i.e., $x\in RN_{R(M)}(u)$.
Therefore $u\in\{u\in X^{c}:\exists x\in X$ s.t. $x\in RN_{R(M)}(u)\}$.
To sum up, this completes the proof.
\end{proof}

The above proposition can be illustrated by the following example.

\begin{example}
Let $M=(U, \mathbf{I})$ be a matroid where $U=\{1, 2, 3\}$ and $\mathbf{I}=\{\emptyset, \{1\}, \{2\}, $ $\{3\}, \{1, 2\}, \{1, 3\}, \{2, 3\}\}$.
Since $\mathbf{C}(M)=\{\{1, 2, 3\}\}$, then $R(M)=U\times U$.
Therefore $cl_{M}(\emptyset)=\emptyset, cl_{M}(\{1\})=\{1\}, cl_{M}(\{2\})=\{2\}, cl_{M}(\{3\})=\{3\}, cl_{M}(\{1, 2\})=cl_{M}(\{1, 3\})=cl_{M}(\{2, 3\})=cl_{M}(\{1, 2, 3\})=\{1, 2, 3\}$ and $H_{R(M)}(\emptyset)=\emptyset, H_{R(M)}($ $\{1\})=H_{R(M)}(\{2\})=H_{R(M)}(\{3\})=H_{R(M)}(\{1, 2\})=H_{R(M)}(\{1, 3\})=H_{R(M)}($ $\{2, 3\})=H_{R(M)}(\{1, 2, 3\})=\{1, 2, 3\}$.
Hence for all $X\subseteq U$, $cl_{M}(X)\subseteq H_{R(M)}(X)$.
\end{example}

According to Proposition~\ref{P:closureofmatroidcontainedbyupperoftheinducedrelation}, the closure operator of a matroid is contained in the upper approximation operator of the relation induced by the matroid, when the closure of empty set is equal to empty set.
We consider an issue that when the closure of empty set is not equal to empty set, can the closure operator contain the upper approximation operator?
A counterexample is given in the following.

\begin{example}
Let $U=\{1, 2, 3, 4\}$ and $M$ a matroid on $U$, where $\mathbf{C}(M)=\{\{1, 2, 3\}, $ $\{4\}\}$.
Since $cl_{M}(\emptyset)=\{4\}$, then $cl_{M}(\{1\})=\{1, 4\}$.
Since $U/R(M)=\mathbf{C}(M)$, then $H_{R(M)}(\{1\})=\{1, 2, 3\}$.
Therefore, $H_{R(M)}(\{1\})\nsubseteq cl_{M}(\{1\})$.
\end{example}

Under what condition the closure operator contains the upper approximation operator?
In the following proposition, we study this issue.

\begin{proposition}
\label{P:closuecontainsupper}
Let $M=(U, \mathbf{I})$ be a matroid and $R(M)$ the relation induced by $M$.
If for all $C\in\mathbf{C}(M), |C|\leq 2$, then $H_{R(M)}(X)\subseteq cl_{M}(X)$ for all $X\subseteq U$.
\end{proposition}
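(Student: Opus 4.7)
The plan is to unfold both operators on a single element $u \in H_{R(M)}(X)$ and show $u \in cl_{M}(X)$, using the cardinality hypothesis to control the shape of the circuit witnessing membership. First I would fix $X \subseteq U$ and pick $u \in H_{R(M)}(X)$. By Definition~\ref{D:approximationoperators}, $RN_{R(M)}(u) \cap X \neq \emptyset$, so there exists $x \in X$ with $(u,x) \in R(M)$.

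Next I would split into cases according to Definition~\ref{D:relationinducedbymatroid}. If $u = x$, then $u \in X \subseteq cl_{M}(X)$ by Definition~\ref{D:closure}, and we are done immediately. If $u \neq x$, the definition of $R(M)$ provides some $C \in \mathbf{C}(M)$ with $\{u,x\} \subseteq C$. Here the hypothesis $|C| \leq 2$ is invoked: since $|\{u,x\}| = 2$, we are forced to have $C = \{u,x\}$. Because $x \in X$, this circuit satisfies $C \subseteq X \cup \{u\}$. If additionally $u \in X^{c}$, then $u$ meets the defining condition $u \in C \subseteq X \cup \{u\}$ in Definition~\ref{D:closure}, so $u \in cl_{M}(X)$; and if $u \in X$, again $u \in cl_{M}(X)$ trivially. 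Either way, $H_{R(M)}(X) \subseteq cl_{M}(X)$.

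There is no real obstacle — the argument is a direct unfolding of definitions, and the only nontrivial step is recognising that $|C| \leq 2$ together with $\{u,x\} \subseteq C$ and $u \neq x$ pins down $C$ exactly as $\{u,x\}$, which is precisely the circuit needed to witness $u \in cl_{M}(X)$. This parallels the argument of Proposition~\ref{P:closureofinducedmatroidcontainsupper} but runs in the opposite direction across the two constructions, so I expect the proof to be essentially a two-line remark referring back to Definition~\ref{D:relationinducedbymatroid} and Definition~\ref{D:closure}.
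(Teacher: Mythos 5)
Your proof is correct and follows essentially the same route as the paper: both arguments hinge on the observation that $u\neq x$, $\{u,x\}\subseteq C$ and $|C|\leq 2$ force $C=\{u,x\}\subseteq X\cup\{u\}$, which witnesses $u\in cl_{M}(X)$. The only cosmetic difference is that the paper first uses the fact that $R(M)$ is an equivalence relation to rewrite $H_{R(M)}(X)$ as $X$ union a set of elements of $X^{c}$ and then argues only about that second set, whereas you handle the cases $u\in X$ and $u=x$ directly; the substance is identical.
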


\begin{proof}
According to Proposition~\ref{P:theinducedrelationisequivalencerelation}, Definition~\ref{D:approximationoperators} and Definition~\ref{D:closure}, we need only to prove $\{u\in X^{c}:\exists x\in X$ s.t. $x\in RN_{R(M)}(u)\}\subseteq\{u\in X^{c}:\exists C\in\mathbf{C}(M)$ s.t. $u\in C\subseteq X\cup\{u\}\}$.
For all $u\in\{u\in X^{c}:\exists x\in X$ s.t. $x\in RN_{R(M)}(u)\}$, then $(u, x)\in R(M)$.
According to Definition~\ref{D:relationinducedbymatroid}, there exists $C\in\mathbf{C}(M)$ such that $\{x, u\}\subseteq C$.
Since for all $C\in\mathbf{C}(M), |C|\leq 2$, then $C=\{x, u\}$, i.e., $u\in C\subseteq X\cup\{u\}$.
Therefore, $u\in\{u\in X^{c}:\exists C\in\mathbf{C}(M)$ s.t. $u\in C\subseteq X\cup\{u\}\}$.
To sum up, this completes the proof.
\end{proof}

In the following theorem, we investigate a sufficient and necessary condition when the closure operator of a matroid is equal to the upper approximation operator of the relation induced by the matroid.

\begin{theorem}
Let $M=(U, \mathbf{I})$ be a matroid and $R(M)$ the relation induced by $M$.
For all $X\subseteq U$, $H_{R(M)}(X)=cl_{M}(X)$ iff $\mathbf{C}(M)=\emptyset$ or $M$ is a 2-circuit matroid.
\end{theorem}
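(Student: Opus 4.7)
The plan is to prove both directions by reducing the question to a comparison of the ``witness sets'' that appear in the two operators, namely $\{u\in X^c:\exists x\in X \text{ s.t. } x\in RN_{R(M)}(u)\}$ for $H_{R(M)}$ and $\{u\in X^c:\exists C\in\mathbf{C}(M)\text{ s.t. }u\in C\subseteq X\cup\{u\}\}$ for $cl_M$.

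For the sufficiency ($\Leftarrow$), I would split on the two cases. If $\mathbf{C}(M)=\emptyset$, then by Definition~\ref{D:relationinducedbymatroid} the relation $R(M)$ is just the diagonal on $U$, so $RN_{R(M)}(u)=\{u\}$ and $H_{R(M)}(X)=X$; on the matroid side, having no circuits means $cl_M(X)=X$ by Definition~\ref{D:closure}. The two operators thus trivially coincide. If instead $M$ is a 2-circuit matroid, then in particular $\{x\}\notin\mathbf{C}(M)$ for any $x$, hence $cl_M(\emptyset)=\emptyset$; Proposition~\ref{P:closureofmatroidcontainedbyupperoftheinducedrelation} then gives $cl_M(X)\subseteq H_{R(M)}(X)$, while Proposition~\ref{P:closuecontainsupper} gives the reverse inclusion. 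This settles this direction.

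For the necessity ($\Rightarrow$), assume the equality of operators and $\mathbf{C}(M)\neq\emptyset$; I would show every circuit has size exactly $2$ by ruling out sizes $1$ and $\geq 3$. If some $\{x\}\in\mathbf{C}(M)$, then $x\in cl_M(\emptyset)$, but by (1H) of Proposition~\ref{P:propertiesofapproximations} we have $H_{R(M)}(\emptyset)=\emptyset$, contradicting the equality at $X=\emptyset$. If instead some $C\in\mathbf{C}(M)$ has $|C|\geq 3$, pick distinct $x,y\in C$ and set $X=\{x\}$; since $\{x,y\}\subseteq C$, Definition~\ref{D:relationinducedbymatroid} yields $x\in RN_{R(M)}(y)$, so $y\in H_{R(M)}(\{x\})=cl_M(\{x\})$. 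By Definition~\ref{D:closure} there exists $C'\in\mathbf{C}(M)$ with $y\in C'\subseteq\{x,y\}$, so $C'\in\{\{y\},\{x,y\}\}$; in either case $C'\subsetneq C$, contradicting (C2) in Proposition~\ref{P:circuitaxiom}.

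The step I expect to be most delicate is the $|C|\geq 3$ case, because one must choose the test set $X$ so that the equality of operators exposes a proper subcircuit inside $C$; the choice $X=\{x\}$ for a single element $x\in C$ is forced by the need to have $C$ barely fit inside $X\cup\{u\}$ after witnessing $y$ via the induced relation. The size-$1$ case and the $\mathbf{C}(M)=\emptyset$ branch are routine, but they must be handled explicitly because the statement disjoins the two sufficient conditions and the two previously established containment propositions both implicitly require $cl_M(\emptyset)=\emptyset$.
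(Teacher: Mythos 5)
Your proof is correct, and it diverges from the paper's in an instructive way. On sufficiency you and the paper agree on the case split, but you swap the tools: the paper verifies the 2-circuit case by directly computing that the two witness sets $\{u\in X^{c}:\exists x\in X \mbox{ s.t. } x\in RN_{R(M)}(u)\}$ and $\{u\in X^{c}:\exists C\in\mathbf{C}(M) \mbox{ s.t. } u\in C\subseteq X\cup\{u\}\}$ coincide, whereas you assemble the two containments from Propositions~\ref{P:closureofmatroidcontainedbyupperoftheinducedrelation} and~\ref{P:closuecontainsupper}, correctly noting that the hypothesis $cl_{M}(\emptyset)=\emptyset$ of the former is supplied by the absence of singleton circuits. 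The real difference is in the necessity direction: the paper disposes of it with ``According to Proposition~\ref{P:closureofmatroidcontainedbyupperoftheinducedrelation} and Proposition~\ref{P:closuecontainsupper}, it is straightforward,'' but those two propositions only assert sufficient conditions for the respective containments, so they cannot by themselves rule out circuits of size $1$ or size $\geq 3$; the paper's argument here is at best a gap filled by analogy with the reductio used in the earlier theorem for equivalence relations. Your explicit reductio --- excluding singleton circuits via $X=\emptyset$ and $H_{R(M)}(\emptyset)=\emptyset$, and excluding $|C|\geq 3$ by testing $X=\{x\}$ for $x\in C$ and extracting a proper subcircuit $C'\subseteq\{x,y\}\subsetneq C$ that violates (C2) --- is the argument the paper should have written, and your choice of test set is exactly right. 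The only cost of your route is that it leans on the two containment propositions for sufficiency, whereas the paper's direct computation there is self-contained.
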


\begin{proof}
(1) Since $\mathbf{C}(M)=\emptyset$, according to Definition~\ref{D:relationinducedbymatroid}, $R(M)=\{(x, x):x\in U\}$.
According to Definition~\ref{D:closure}, for all $X\subseteq U$,  $cl_{M}(X)=X$.
And according to Definition~\ref{D:approximationoperators}, for all $X\subseteq U$, $H_{R(M)}(X)=X$.
Therefore, $cl_{M}(X)=H_{R(M)}(X)=X$ for all $X\subseteq U$.
Similarly, if $cl_{M}(X)=H_{R(M)}(X)=X$, then $\mathbf{C}(M)=\emptyset$.\\
(2) According to Proposition~\ref{P:theinducedrelationisequivalencerelation}, Definition~\ref{D:approximationoperators} and Definition~\ref{D:closure}, we need only to prove $\{u\in X^{c}:\exists x\in X$ s.t. $x\in RN_{R(M)}(u)\}=\{u\in X^{c}:\exists C\in\mathbf{C}(M)$ s.t. $u\in C\subseteq X\cup\{u\}\}$ iff $M$ is a 2-circuit matroid.\\
$(\Leftarrow)$: Since $M$ is a 2-circuit matroid, then for all $C\in\mathbf{C}(M), |C|=2$.
$\{u\in X^{c}:\exists x\in X$ s.t. $x\in RN_{R(M)}(u)\}=\{u\in X^{c}:\exists x\in X$ s.t. $(x, u)\in R(M)\}=\{u\in X^{c}:\exists x\in X$ s.t. $\exists C\in\mathbf{C}(M), \{x, u\}\subseteq C\}=\{u\in X^{c}:\exists C\in\mathbf{C}(M),$ s.t. $\{x, u\}= C\subseteq X\cup\{u\}\}=\{u\in X^{c}:\exists C\in\mathbf{C}(M)$ s.t. $u\in C\subseteq X\cup\{u\}\}$.\\
$(\Rightarrow)$: According to Proposition~\ref{P:closureofmatroidcontainedbyupperoftheinducedrelation} and Proposition~\ref{P:closuecontainsupper}, it is straightforward.
\end{proof}

\section{Relationships between the two constructions}
\label{S:relationshipsbetweenthetwoinductions}
In this section, we study the relationships between the two constructions in Section~\ref{S:twoinductionsbetweenmatroidandrelation}.
The first construction takes a relation and yields a matroid, and the second construction takes a matroid and then yields a relation.
Firstly, given a matroid, it can generate an equivalence relation, and the equivalence relation can generate a matroid, then the connection between the original matroid and the induced matroid is built.
In order to study the connection, we introduce the definition of the finer family of subsets on a universe.

\begin{definition}
\label{D:thefiner}
Let $\mathbf{F}_{1}, \mathbf{F}_{2}$ be two families of subsets on $U$.
If for all $F_{1}\in\mathbf{F}_{1}$, there exists $F_{2}\in\mathbf{F}_{2}$ such that $F_{1}\subseteq F_{2}$, then we say $\mathbf{F}_{1}$ is finer than $\mathbf{F}_{2}$, and denote it as $\mathbf{F}_{1}\leq\mathbf{F}_{2}$.
\end{definition}

In the following proposition, we will represent the relationship between the circuits of a matroid and the circuits of the matroid induced by the equivalence relation which is generated by the original matroid.

\begin{proposition}
\label{P:thefinercircuit}
Let $M=(U, \mathbf{I})$ be a matroid.
Then $\mathbf{C}(M)\leq\mathbf{C}(M(R(M)))$.
\end{proposition}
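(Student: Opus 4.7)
The plan is to unfold both sides: the right-hand side $\mathbf{C}(M(R(M)))$ is by Definition~\ref{D:amatroidinducedbyarelation} the family $\mathbf{C}(R(M)) = FMIN\{RN_{R(M)}(x) : x \in U\}$, so every ``circuit'' on the right is a minimal successor-neighborhood of the equivalence relation $R(M)$. By Proposition~\ref{P:theinducedrelationisequivalencerelation}, $R(M)$ is an equivalence relation, hence transitive and symmetric; this forces $RN_{R(M)}(y) = RN_{R(M)}(x)$ whenever $y \in RN_{R(M)}(x)$, so no neighborhood can strictly contain another, and therefore $\mathbf{C}(M(R(M))) = \{RN_{R(M)}(x) : x \in U\}$ is exactly the set of equivalence classes of $R(M)$.

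Given this identification, the statement reduces to: every circuit $C \in \mathbf{C}(M)$ is contained in some equivalence class of $R(M)$. I would prove this by picking an arbitrary $C \in \mathbf{C}(M)$, fixing any $x_0 \in C$ (possible since $\emptyset \notin \mathbf{C}(M)$ by (C1)), and verifying that $C \subseteq RN_{R(M)}(x_0)$. For any $y \in C$, we have $\{x_0, y\} \subseteq C \in \mathbf{C}(M)$, so by the very definition of $R(M)$ in Definition~\ref{D:relationinducedbymatroid} we get $(x_0, y) \in R(M)$, i.e., $y \in RN_{R(M)}(x_0)$. Thus $C \subseteq RN_{R(M)}(x_0) \in \mathbf{C}(M(R(M)))$, which is exactly the $\leq$ condition of Definition~\ref{D:thefiner}.

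There is essentially no hard step here; the only subtle point is verifying that for an equivalence relation every successor neighborhood is already minimal, so that $FMIN$ acts trivially and the circuits of $M(R(M))$ coincide with the equivalence classes. I would state this as a brief lemma-like observation inside the proof (using transitivity to get $RN(y) \subseteq RN(x)$ and symmetry to get the reverse inclusion), after which the containment of each $C \in \mathbf{C}(M)$ in a single class follows directly from the construction of $R(M)$.
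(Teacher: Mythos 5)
Your proposal is correct and follows essentially the same route as the paper: identify the circuits of $M(R(M))$ with the equivalence classes of $R(M)$ (the paper simply asserts $FMIN\{RN_{R(M)}(z):z\in U\}=U/R(M)$ where you justify the triviality of $FMIN$ explicitly), and then show each $C\in\mathbf{C}(M)$ lies in a single class because any two of its elements are $R(M)$-related by Definition~\ref{D:relationinducedbymatroid}. Your explicit remark that $C\neq\emptyset$ by (C1) is a small point the paper leaves implicit.
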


\begin{proof}
For all $C\in\mathbf{C}(M)$, suppose $x, y\in C$.
According to Definition~\ref{D:relationinducedbymatroid}, we can obtain $(x, y)\in R(M)$.
According to Proposition~\ref{P:theinducedrelationisequivalencerelation}, $R(M)$ is an equivalence relation on $U$, then $x, y\in RN_{R(M)}(x)$.
According to Definition~\ref{D:thesetfamily} and Definition~\ref{D:amatroidinducedbyarelation}, $\mathbf{C}(M(R(M)))=FMIN\{RN_{R(M)}(z):z\in U\}$.
Since $FMIN\{RN_{R(M)}(z):z\in U\}=U/R(M)$ and $RN_{R(M)}(x)\in U/R(M)$, then there exists $RN_{R(M)}(x)\in\mathbf{C}(M(R(M)))$ such that $C\subseteq RN_{R(M)}(x)$.
According to Definition~\ref{D:thefiner}, $\mathbf{C}(M)\leq\mathbf{C}(M(R(M)))$.
\end{proof}

In order to further comprehend Proposition~\ref{P:thefinercircuit}, the following example is given.

\begin{example}
Let $M=(U, \mathbf{I})$ a matroid, where $U=\{1, 2, 3\}$ and $\mathbf{I}=\{\emptyset, \{1\}, \{2\}, \{3\}\}$.
Since $\mathbf{C}(M)=\{\{1, 2\}, \{1, 3\}, \{2, 3\}\}$, according to Definition~\ref{D:relationinducedbymatroid}, $R(M)=\{(1, 1),$ $ (1, 2), (1, 3), (2, 2), (2, 1), (2, 3), (3, 3), (3, 1), (3, 2)\}$.
According to Definition~\ref{D:thesetfamily} and Definition~\ref{D:amatroidinducedbyarelation}, $\mathbf{C}(M(R(M)))=\{\{1, 2, 3\}\}$.
Therefore $\mathbf{C}(M)\leq\mathbf{C}(M(R(M)))$.
\end{example}

A matroid can induce an equivalence relation, and the equivalence relation can generate a matroid, then a sufficient and necessary condition when the original matroid is equal to the induced matroid is studied in the following theorem.

\begin{theorem}
Let $M=(U, \mathbf{I})$ be a matroid.
Then, $M(R(M))=M$ if and only if $\mathbf{C}(M)$ is a partition on $U$.
\end{theorem}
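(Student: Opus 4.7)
The plan is to reduce the matroid equality $M(R(M)) = M$ to the equality of circuit families $\mathbf{C}(M) = \mathbf{C}(M(R(M)))$, which is legitimate by Proposition~\ref{P:circuitaxiom}. The key preliminary observation is that $\mathbf{C}(M(R(M))) = FMIN\{RN_{R(M)}(x):x\in U\}$ by Definition~\ref{D:thesetfamily} and Definition~\ref{D:amatroidinducedbyarelation}, and since $R(M)$ is an equivalence relation by Proposition~\ref{P:theinducedrelationisequivalencerelation}, the neighborhoods $RN_{R(M)}(x)$ are precisely the equivalence classes. So $\mathbf{C}(M(R(M))) = U/R(M)$, which is automatically a partition of $U$.

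For the forward direction $(\Rightarrow)$, I would argue directly: if $M(R(M))=M$, then $\mathbf{C}(M) = \mathbf{C}(M(R(M))) = U/R(M)$. Since $R(M)$ is reflexive and transitive and symmetric, $U/R(M)$ covers $U$ with pairwise disjoint blocks, so $\mathbf{C}(M)$ is a partition of $U$.

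For the backward direction $(\Leftarrow)$, assume $\mathbf{C}(M)$ is a partition of $U$. The goal is to show $U/R(M) = \mathbf{C}(M)$. Fix $x\in U$; by the partition assumption there is a unique $C_x\in\mathbf{C}(M)$ containing $x$. Unpacking Definition~\ref{D:relationinducedbymatroid}, $RN_{R(M)}(x) = \{x\}\cup\{y:\exists C\in\mathbf{C}(M), \{x,y\}\subseteq C\}$. Any such $C$ must be $C_x$ by uniqueness of the block containing $x$, so $RN_{R(M)}(x)\subseteq C_x$; conversely any $y\in C_x$ satisfies $\{x,y\}\subseteq C_x$, so $C_x\subseteq RN_{R(M)}(x)$. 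Hence $RN_{R(M)}(x)=C_x$, and ranging over $x$ gives $U/R(M)=\mathbf{C}(M)$. Then $\mathbf{C}(M(R(M))) = U/R(M) = \mathbf{C}(M)$, so $M(R(M))=M$.

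The proof is almost entirely bookkeeping; the only subtle point is ensuring that in the backward direction every element of $U$ actually lies in some circuit, which is exactly what ``partition of $U$'' supplies (and is what prevents loops or fixed elements from appearing as singleton equivalence classes outside $\mathbf{C}(M)$). I expect the main obstacle, if any, is simply making the chain $\mathbf{C}(M(R(M))) = FMIN\{RN_{R(M)}(x):x\in U\} = U/R(M)$ explicit, since the $FMIN$ collapses to the whole set of equivalence classes only because they are already pairwise incomparable. The finer-than inequality $\mathbf{C}(M)\leq \mathbf{C}(M(R(M)))$ of Proposition~\ref{P:thefinercircuit} can optionally be invoked to shortcut one inclusion, but the direct computation above is just as short.
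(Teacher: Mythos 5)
Your proposal is correct and follows essentially the same route as the paper: reduce $M(R(M))=M$ to $\mathbf{C}(M)=\mathbf{C}(M(R(M)))$ via the circuit axioms, identify $\mathbf{C}(M(R(M)))$ with $U/R(M)$ using the fact that $R(M)$ is an equivalence relation, and then check both directions by comparing $U/R(M)$ with $\mathbf{C}(M)$. Your write-up is in fact more careful than the paper's, which asserts $U/R(M)=\mathbf{C}(M)$ in the backward direction without the explicit block-by-block verification you give.
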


\begin{proof}
According to Proposition~\ref{D:circuit}, we need only to prove $\mathbf{C}(M)=\mathbf{C}(M(R(M)))$ iff $\mathbf{C}(M)$ is a partition on $U$.\\
$(\Rightarrow)$: According to Proposition~\ref{P:theinducedrelationisequivalencerelation}, $R(M)$ is an equivalence relation on $U$.
According to Definition~\ref{D:thesetfamily} and Definition~\ref{D:amatroidinducedbyarelation}, $\mathbf{C}(M(R(M)))=U/R(M)$.
Since $\mathbf{C}(M)=\mathbf{C}(M(R(M)))$, then $\mathbf{C}(M)$ is a partition on $U$.\\
$(\Leftarrow)$: If $\mathbf{C}(M)$ is a partition on $U$, according to Definition~\ref{D:relationinducedbymatroid}, then $U/R(M)=\mathbf{C}(M)$.
According to Definition~\ref{D:thesetfamily} and Definition~\ref{D:amatroidinducedbyarelation}, then $\mathbf{C}(M(R(M)))=U/R(M)$.
Therefore $\mathbf{C}(M)=\mathbf{C}(M(R(M)))$.
\end{proof}

Similarly, a serial and transitive relation can generate a matroid, and the matroid can generate an equivalence relation, then the relationship between the original relation and the induced equivalence relation is studied as follows.
First, we present a lemma about the transitivity of a relation.

\begin{lemma}
\label{L:transitive_therelationshipbetweentheminimalneighbors}
Let $R$ be a transitive relation on $U$.
For all $RN(x), RN(y)\in FMIN\{RN(z):z\in U\}$, if $RN(x)\neq RN(y)$, then $RN(x)\cap RN(y)=\emptyset$.
\end{lemma}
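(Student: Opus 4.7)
The plan is to argue by contradiction, closely mirroring the reasoning already used in the proof of (C3) in Proposition~\ref{P:CRsatisfiescircuitaxiom}. Assume $RN(x), RN(y) \in FMIN\{RN(z):z\in U\}$ with $RN(x) \neq RN(y)$, and suppose for contradiction that there exists some $w \in RN(x) \cap RN(y)$.

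Next I would invoke transitivity twice. From $w \in RN(x)$ and (2) of Definition~\ref{D:serialandtransitive}, I get $RN(w) \subseteq RN(x)$; from $w \in RN(y)$ I likewise get $RN(w) \subseteq RN(y)$. Since $RN(w)$ belongs to the family $\{RN(z):z\in U\}$, and $RN(x)$ is a minimal element of this family (by Definition~\ref{D:symbol} applied to $FMIN$), the inclusion $RN(w) \subseteq RN(x)$ forces $RN(w) = RN(x)$. The same minimality argument applied to $RN(y)$ yields $RN(w) = RN(y)$. Combining these two equalities gives $RN(x) = RN(y)$, contradicting our assumption. Hence no such $w$ exists, and $RN(x) \cap RN(y) = \emptyset$.

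I expect no real obstacle here; the argument is essentially a repackaging of the calculation already carried out inside the (C3) part of Proposition~\ref{P:CRsatisfiescircuitaxiom}, and in fact that earlier proof implicitly used exactly this disjointness property. The only point to watch is purely bookkeeping: when citing the minimality of $RN(x)$ and $RN(y)$, one should explicitly note that $RN(w)$ is itself a member of $\{RN(z):z\in U\}$ (so that the definition of $FMIN$ applies), and that the lemma does not require $R$ to be serial, unlike the parent proposition.
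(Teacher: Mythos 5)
Your proposal is correct and follows essentially the same route as the paper: both argue by contradiction, pick a common element $w$ of $RN(x)\cap RN(y)$, apply transitivity to get $RN(w)\subseteq RN(x)$ and $RN(w)\subseteq RN(y)$, and then invoke the minimality encoded in $FMIN$. Your version is if anything slightly cleaner, since you use minimality to force $RN(w)=RN(x)=RN(y)$ directly, whereas the paper passes through strict inclusions before contradicting minimality.
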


\begin{proof}
Suppose $RN(x)\cap RN(y)\neq\emptyset$, then there exists $z\in U$ such that $z\in RN(x)\cap RN(y)$, i.e., $z\in RN(x), z\in RN(y)$.
According to Definition~\ref{D:serialandtransitive}, $RN(z)\subseteq RN(x),$ $ RN(z)\subseteq RN(y)$.
Since $RN(x)\neq RN(y)$, then $RN(z)\subset RN(x), RN(z)\subset RN(y)$, which is contradictory with $RN(x), RN(y)\in FMIN\{RN(z):z\in U\}$.
Therefore, $RN(x)\cap RN(y)=\emptyset$.
\end{proof}

If a relation is reflexive, then it is also serial.
Therefore, a reflexive and transitive relation can generate a matroid according to Definition~\ref{D:amatroidinducedbyarelation}.

\begin{proposition}
Let $R$ be a reflexive and transitive relation on $U$.
Then $R(M(R))\subseteq R$.
\end{proposition}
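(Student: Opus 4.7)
The plan is to unpack the definitions of $R(M(R))$ and $\mathbf{C}(M(R)) = \mathbf{C}(R)$, take an arbitrary pair $(x,y) \in R(M(R))$, and show directly that $(x,y) \in R$. Since $R$ is reflexive it is in particular serial, so $M(R)$ is well-defined by Definition~\ref{D:amatroidinducedbyarelation}, with circuit family $\mathbf{C}(R) = FMIN\{RN(w) : w \in U\}$.

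By Definition~\ref{D:relationinducedbymatroid}, $(x,y) \in R(M(R))$ splits into two cases. In the trivial case $x = y$, reflexivity of $R$ gives $(x,x) \in R$ immediately; this is the only place reflexivity (as opposed to seriality) is actually used. Otherwise there exists $C \in \mathbf{C}(R)$ with $\{x,y\} \subseteq C$, and by definition $C = RN(z)$ for some $z \in U$ with $RN(z) \in FMIN\{RN(w) : w \in U\}$. In particular $x \in RN(z)$ and $y \in RN(z)$.

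The key step is then to apply transitivity plus the minimality built into $FMIN$. From $x \in RN(z)$ and transitivity of $R$ (using (2) of Definition~\ref{D:serialandtransitive}), one obtains $RN(x) \subseteq RN(z)$. But $RN(x)$ itself belongs to the family $\{RN(w) : w \in U\}$, so the minimality of $RN(z)$ in $FMIN$ forces $RN(x) = RN(z)$. Consequently $y \in RN(z) = RN(x)$, which by Definition~\ref{D:neighborhood} means exactly $(x,y) \in R$, as required.

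The argument is essentially a one-line minimality trick; I do not anticipate a genuine obstacle. The only point that needs care is being explicit that $RN(x)$ actually lies in the family over which $FMIN$ is taken, so that minimality of $RN(z)$ really does apply and collapse $RN(x) \subseteq RN(z)$ to $RN(x) = RN(z)$; this is where Lemma~\ref{L:transitive_therelationshipbetweentheminimalneighbors} is philosophically doing its work, although here one only needs the underlying equality $RN(x) = RN(z)$ rather than the disjointness conclusion of the lemma itself.
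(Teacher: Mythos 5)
Your proof is correct and follows essentially the same route as the paper's: split on $x=y$ versus $x\neq y$, use transitivity to get $RN_{R}(x)\subseteq RN_{R}(z)$, and let the minimality in $FMIN$ collapse this to equality so that $y\in RN_{R}(x)$. If anything, your version is slightly more direct --- the paper detours through showing $(x,z)\in R$ and $(z,y)\in R$ and then applies transitivity once more, and it also invokes reflexivity and Lemma~\ref{L:transitive_therelationshipbetweentheminimalneighbors} in the $x\neq y$ case where, as you observe, the bare $FMIN$ minimality already suffices.
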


\begin{proof}
According to Definition~\ref{D:relationinducedbymatroid}, Definition~\ref{D:thesetfamily} and Definition~\ref{D:amatroidinducedbyarelation}, we can obtain that for all $(x, y)\in R(M(R))$, if $x\neq y$, then there exists $z\in U$ such that $\{x, y\}\subseteq RN(z)\in FMIN\{RN_{R}(x):x\in U\}$, i.e., $x\in RN_{R}(z), y\in RN_{R}(z)$.
Since $R$ is reflexive and transitive, then $x\in RN_{R}(x)\subseteq RN_{R}(z)$, i.e., $x\in RN_{R}(x)\cap RN_{R}(z)$.
Since $RN_{R}(z)\in FMIN\{RN_{R}(x):x\in U\}$, according to Lemma~\ref{L:transitive_therelationshipbetweentheminimalneighbors}, $RN_{R}(x)=RN_{R}(z)$, then $z\in RN_{R}(x)$, i.e., $(x, z)\in R$.
Since $y\in RN_{R}(z)$, i.e., $(z, y)\in R$, and $R$ is transitive, then $(x, y)\in R$.
Since $R$ is reflexive, then $(x, x)\in R$ for all $x\in U$.
Hence $R(M(R))\subseteq R$.
\end{proof}

The above proposition can be illustrated by the following example.

\begin{example}
Let $U=\{1, 2, 3, 4\}$ and $R=\{(1, 1), (1, 2), (1, 3), (2, 2), (2, 3), (3, 3), $ $(3, 2), (4, 4)\}$ a reflexive and transitive relation on $U$.
Since $RN_{R}(1)=\{1, 2, 3\}, $ $RN_{R}(2)=RN_{R}(3)=\{2, 3\}, RN_{R}(4)=\{4\}$, then $\mathbf{C}(M(R))=FMIN\{RN_{R}(x):x\in U\}=\{\{2, 3\}, \{4\}\}$.
Therefore, $R(M(R))=\{(1, 1), (2, 2), (2, 3), (3, 3), (3, 2),$ $ (4, 4)\}$.
We can see that $R(M(R))\subseteq R$.
\end{example}

A sufficient and necessary condition, when the relation is equal to the induced equivalence relation, is investigated in the following theorem.

\begin{theorem}
Let $R$ be a serial and transitive relation on $U$.
Then, $R(M(R))=R$ if and only if $R$ is an equivalence relation.
\end{theorem}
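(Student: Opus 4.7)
The plan is to prove the two implications separately, noting that one direction will be nearly immediate while the other requires explicitly tracing through both constructions.

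For the forward implication, I would simply invoke Proposition~\ref{P:theinducedrelationisequivalencerelation}, which guarantees that $R(M)$ is an equivalence relation for every matroid $M$. Applying this with $M=M(R)$ shows that $R(M(R))$ is an equivalence relation, and the hypothesis $R(M(R))=R$ then forces $R$ itself to be an equivalence relation. This is essentially a one-line argument.

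For the reverse implication, assume $R$ is an equivalence relation on $U$. Then $R$ is reflexive (hence serial) and transitive, so $M(R)$ is well-defined via Definition~\ref{D:amatroidinducedbyarelation}. The key computational step is to determine $\mathbf{C}(R)$ explicitly: for any $x\in U$, the successor neighborhood $RN_R(x)$ is exactly the equivalence class $[x]_R$, and since distinct equivalence classes of $R$ are disjoint, no successor neighborhood is properly contained in another. Hence $\mathbf{C}(R)=FMIN\{RN_R(x):x\in U\}$ coincides with the quotient partition $U/R$, and by Definition~\ref{D:amatroidinducedbyarelation} the circuits of $M(R)$ are precisely the equivalence classes of $R$.

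It then remains to unfold Definition~\ref{D:relationinducedbymatroid}. For $x\ne y$, we have $(x,y)\in R(M(R))$ iff some circuit $C\in\mathbf{C}(M(R))=U/R$ contains $\{x,y\}$, which holds iff $x$ and $y$ lie in the same equivalence class of $R$, i.e., iff $(x,y)\in R$. For $x=y$, both $(x,x)\in R$ (by reflexivity of $R$) and $(x,x)\in R(M(R))$ (by definition) hold automatically. Combining the two cases yields $R(M(R))=R$. The argument presents no serious obstacle: the whole content reduces to the observation that the minimal-neighborhood family of an equivalence relation is its quotient partition, together with a mechanical unwrapping of the two constructions; the forward direction is essentially free, coming from the already-established fact that $R(M(R))$ is always an equivalence relation.
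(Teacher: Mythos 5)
Your proposal is correct and follows essentially the same route as the paper: the forward direction is the immediate appeal to Proposition~\ref{P:theinducedrelationisequivalencerelation}, and the reverse direction rests on the identification $\mathbf{C}(M(R))=U/R$ followed by unwinding Definition~\ref{D:relationinducedbymatroid}. You merely spell out the details that the paper compresses into ``it is straightforward,'' which is a fair and complete elaboration.
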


\begin{proof}
$(\Rightarrow)$: According to Proposition~\ref{P:theinducedrelationisequivalencerelation}, $R(M(R))$ is an equivalence relation on $U$.
Since $R(M(R))=R$, then $R$ is an equivalence relation.\\
$(\Leftarrow)$: $R$ is an equivalence relation, then $\mathbf{C}(M(R))=U/R$.
According to Definition~\ref{D:relationinducedbymatroid} and Proposition~\ref{P:theinducedrelationisequivalencerelation}, it is straightforward to prove that $R(M(R))=R$.
\end{proof}


\section{Conclusions}
\label{S:conclusions}
In order to broaden the theoretical and application fields of rough sets and matroids, their connections with other theories have been built.
In this paper, we connected matroids and generalized rough sets based on relations.
For a serial and transitive relation on a universe, we proposed a matroidal structure through the neighborhood of the relation.
First, we defined the family of all minimal neighborhoods of a relation on a universe, and proved it to satisfy the circuit axioms of matroids when the relation was serial and transitive.
The independent sets of the matroid were studied, and the connections between the upper approximation operator of the relation and the closure operator of the matroid were investigated.
In order to study the matroidal structure of the rough set based on a serial and transitive, we investigated the inverse of the above construction: inducing a relation by a matroid.
Through the connectedness in a matroid, a relation was obtained and proved to be an equivalence relation.
And the closure operator of the matroid was equal to the upper approximation operator of the induced equivalence relation if and only if the matroid was a 2-circuit matroid.
Second, the relationships between the above two constructions were investigated.
For a matroid on a universe, it induced an equivalence relation, and the equivalence relation generated a matroid, then the original matroid was equal to the induced matroid if and only if the family of circuits of the original matroid was a partition on the universe.
For a serial and transitive relation on a universe, it generated a matroid, and the matroid induced an equivalence relation, then the original relation was equal to the induced equivalence relation if and only if the original relation was an equivalence relation.

\section*{Acknowledgments}
This work is supported in part by the National Natural Science Foundation of China under Grant No. 61170128, the Natural Science Foundation of Fujian Province, China, under Grant Nos. 2011J01374 and 2012J01294, the Science and Technology Key Project of Fujian Province, China, under Grant No. 2012H0043.


\begin{thebibliography}{10}

\bibitem{AignerDowling71matchingtheory}
Aigner, M., Dowling, T.A.:
\newblock Matching theory for combinatorial geometries.
\newblock Transactions of the American Mathematical Society \textbf{158} (1971)
   231--245

\bibitem{BonikowskiBryniarskiWybraniecSkardowska98Extensions}
Bonikowski, Z., Bryniarski, E., Wybraniec-Skardowska, U.:
\newblock Extensions and intentions in the rough set theory.
\newblock Information Sciences \textbf{107} (1998)  149--167

\bibitem{CalegariCiucci10}
Calegari, S., Ciucci, D.:
\newblock Granular computing applied to ontologies.
\newblock International Journal of Approximate Reasoning \textbf{51} (2010)
  391--409

\bibitem{ChenMiaoWang10aroughsetapproach}
Chen, Y., Miao, D., Wang, R.:
\newblock A rough set approach to feature selection based on ant colony
  optimization.
\newblock Pattern Recognition Letters \textbf{31} (2010)  226--233

\bibitem{DaiXu12approximations}
Dai, J., Xu, Q.:
\newblock Approximations and uncertainty measures in incomplete information
  systems.
\newblock Information Sciences \textbf{198} (2012)  62--80

\bibitem{HallHolmes03Benchmarking}
Hall, M., Holmes, G.:
\newblock Benchmarking attribute selection techniques for discrete class data
  mining.
\newblock IEEE Transactions on Knowledge and Data Engineering \textbf{15}
  (2003)  1437--1447

\bibitem{Kortelainen94onrelationship}
Kortelainen, J.:
\newblock On relationship between modified sets, topological spaces and rough
  sets.
\newblock Fuzzy Sets and Systems \textbf{61} (1994)  91--95

\bibitem{Kryszkiewicz98Rough}
Kryszkiewicz, M.:
\newblock Rough set approach to incomplete information systems.
\newblock Information Sciences \textbf{112} (1998)  39--49

\bibitem{Mao06TheRelation}
Mao, H.:
\newblock The relation between matroid and concept lattice.
\newblock Advances in Mathematics \textbf{35} (2006)  361--365

\bibitem{Matus91abstractfunctional}
Matus, F.:
\newblock Abstract functional dependency structures.
\newblock Theoretical Computer Science \textbf{81} (1991)  117--126

\bibitem{MiaoZhaoYaoLiXu09relativereducts}
Miao, D., Zhao, Y., Yao, Y., Li, H., Xu, F.:
\newblock Relative reducts in consistent and inconsistent decision tables of
  the pawlak rough set model.
\newblock Information Sciences \textbf{179} (2009)  4140--4150

\bibitem{Lai01Matroid}
Lai, H.:
\newblock Matroid theory.
\newblock Higher Education Press, Beijing (2001)

\bibitem{LiuWang06Semantic}
Liu, Q., Wang, J.:
\newblock Semantic analysis of rough logical formulas based on granular
  computing.
\newblock In: Granular Computing. (2006)  393--396

\bibitem{LiuZhu12characteristicofpartition-circuitmatroid}
Liu, Y., Zhu, W.:
\newblock Characteristic of partition-circuit matroid through approximation
  number.
\newblock In: Granular Computing. (2012)  376--381

\bibitem{LiuZhuZhang12Relationshipbetween}
Liu, Y., Zhu, W., Zhang, Y.:
\newblock Relationship between partition matroid and rough set through k-rank
  matroid.
\newblock Journal of Information and Computational Science \textbf{8} (2012)
  2151--2163

\bibitem{MinHeQianZhu11Test}
Min, F., He, H., Qian, Y., Zhu, W.:
\newblock Test-cost-sensitive attribute reduction.
\newblock Information Sciences \textbf{181} (2011)  4928--4942

\bibitem{Oxley93Matroid}
Oxley, J.G.:
\newblock Matroid theory.
\newblock Oxford University Press (1993)

\bibitem{Pawlak82Rough}
Pawlak, Z.:
\newblock Rough sets.
\newblock International Journal of Computer and Information Sciences
  \textbf{11} (1982)  341--356

\bibitem{QinYangPei08generalizedroughsets}
Qin, K., Yang, J., Pei, Z.:
\newblock Generalized rough sets based on reflexive and transitive relations.
\newblock Information Sciences \textbf{178} (2008)  4138--4141

\bibitem{RajagopalMason92Discrete}
Rajagopal, P., Masone, J.:
\newblock Discrete mathematics for computer science.
\newblock Saunders College, Canada (1992)

\bibitem{SlowinskiVanderpooten00AGeneralized}
Slowinski, R., Vanderpooten, D.:
\newblock A generalized definition of rough approximations based on similarity.
\newblock IEEE Transactions on Knowledge and Data Engineering \textbf{12}
  (2000)  331--336

\bibitem{TangSheZhu12matroidal}
Tang, J., She, K., Zhu, W.:
\newblock Matroidal structure of rough sets from the viewpoint of graph theory.
\newblock to appear in Journal of Applied Mathematics (2012)

\bibitem{WangZhu11Matroidal}
Wang, S., Zhu, W.:
\newblock Matroidal structure of covering-based rough sets through the upper
  approximation number.
\newblock International Journal of Granular Computing, Rough Sets and
  Intelligent Systems \textbf{2} (2011)  141--148

\bibitem{WangZhuZhuMin12matroidalstructure}
Wang, S., Zhu, Q., Zhu, W., Min, F.:
\newblock Matroidal structure of rough sets and its characterization to
  attribute reduction.
\newblock to appear in Knowledge-Based Systems (2012)

\bibitem{WangZhuZhuMin12quantitative}
Wang, S., Zhu, Q., Zhu, W., Min, F.:
\newblock Quantitative analysis for covering-based rough sets using the upper
  approximation number.
\newblock to appear in Information Sciences (2012)

\bibitem{YangXu11topologicalproperties}
Yang, L., Xu, L.:
\newblock Topological properties of generalized approximation spaces.
\newblock Information Sciences \textbf{181} (2011)  3570--3580

\bibitem{Yao98Constructive}
Yao, Y.:
\newblock Constructive and algebraic methods of theory of rough sets.
\newblock Information Sciences \textbf{109} (1998)  21--47

\bibitem{Yao98Relational}
Yao, Y.:
\newblock Relational interpretations of neighborhood operators and rough set
  approximation operators.
\newblock Information Sciences \textbf{111} (1998)  239--259

\bibitem{ZhangQiuWu07ageneralapproach}
Zhang, W., Qiu, G., Wu, W.:
\newblock A general approach to attribute reduction in rough set theory.
\newblock Information Sciences \textbf{50} (2007)  188--197

\bibitem{ZhangWangFengFeng11reductionofrough}
Zhang, S., Wang, X., Feng, T., Feng, L.:
\newblock Reduction of rough approximation space based on matroid.
\newblock International Conference on Machine Learning and Cybernetics
  \textbf{2} (2011)  267--272

\bibitem{Zhu07Generalized}
Zhu, W.:
\newblock Generalized rough sets based on relations.
\newblock Information Sciences \textbf{177} (2007)  4997--5011

\bibitem{Zhu07Topological}
Zhu, W.:
\newblock Topological approaches to covering rough sets.
\newblock Information Sciences \textbf{177} (2007)  1499--1508

\bibitem{ZhuWang03Reduction}
Zhu, W., Wang, F.:
\newblock Reduction and axiomization of covering generalized rough sets.
\newblock Information Sciences \textbf{152} (2003)  217--230

\bibitem{ZhuWang11Matroidal}
Zhu, W., Wang, S.:
\newblock Matroidal approaches to generalized rough sets based on relations.
\newblock International Journal of Machine Learning and Cybernetics \textbf{2}
  (2011)  273--279


\end{thebibliography}

\end{document}